\newcommand{\bsu}{\boldsymbol{u}}
\newcommand{\bsx}{\boldsymbol{x}}
\newcommand{\bsy}{\boldsymbol{y}}
\newcommand{\bsz}{\boldsymbol{z}}
\newcommand{\real}{\mathbb{R}}
\newcommand{\rd}{\mathrm{\, d}}
\newcommand{\diag}{{\mathrm{diag}}}
\newcommand{\tr}{{\mathrm{tr}}}
\newcommand{\var}{{\mathrm{Var}}}
\newcommand{\wt}{\widetilde}
\DeclareMathOperator*{\argmin}{argmin}
\renewcommand{\ge}{\geqslant}
\renewcommand{\le}{\leqslant}
\newcommand{\dnorm}{\mathcal{N}}
\newcommand{\dustd}{\mathbf{U}} 
\newcommand{\e}{{\mathbb{E}}} 
\title{Quasi-Monte Carlo Quasi-Newton for variational Bayes}
\author{Sifan Liu}
\author{Art B. Owen}
\affil{Department of Statistics, Stanford University}
\date{April 2021}
\renewcommand{\le}{\leqslant}
\renewcommand{\leq}{\leqslant}
\renewcommand{\ge}{\geqslant}
\renewcommand{\geq}{\geqslant}
\newcommand{\simind}{\stackrel{\mathrm{ind}}{\sim}}
\newtheorem{theorem}{Theorem}[section]
\newtheorem{lemma}{Lemma}
\newtheorem{remark}{Remark}[section]
\newcommand{\BlackBox}{\rule{1.5ex}{1.5ex}}  
\newcommand{\param}{\theta}
\newcommand{\Param}{\Theta}
\newcommand{\bvhk}{\mathrm{BVHK}}
\newcommand{\giv}{\!\mid\!} 
\newcommand{\bigz}{{\mathcal{Z}}} 
\newcommand{\sumdot}{\text{\tiny$\bullet$}}
\newcommand{\pr}{\mathbb{P}}
    \renewenvironment{proof}{\par\noindent{\bf Proof\ }}{\hfill\BlackBox\\[2mm]}
    \newenvironment{proof}{\par\noindent{\bf Proof\ }}{\hfill\BlackBox\\[2mm]}
\begin{document}
\maketitle

\begin{abstract}
Many machine learning problems optimize an objective that
must be measured with noise.  The primary method is a first
order stochastic gradient descent using one or more
Monte Carlo (MC) samples at each step.
There are settings where ill-conditioning
makes second order methods such as L-BFGS more effective.
We study the use of randomized quasi-Monte Carlo (RQMC) sampling
for such problems.  When MC sampling has a
root mean squared error (RMSE) of $O(n^{-1/2})$ then
RQMC has an RMSE of $o(n^{-1/2})$ that can be close to $O(n^{-3/2})$
in favorable settings. We prove that improved sampling accuracy
translates directly to improved optimization.  In our empirical
investigations for variational Bayes,  using RQMC with stochastic L-BFGS greatly
speeds up the optimization, and sometimes finds a better
parameter value than MC does.
\end{abstract}

\section{Introduction}
Many practical problems take the form
\begin{align}\label{eq:simopt}
\min_{\param\in\Param\subseteq\real^d}F(\param)
\quad\text{where}\quad F(\param) = \e( f(\bsz;\param))
\end{align}
and $\bsz$ is random vector with a known distribution $p$.
Classic problems of simulation-optimization \citep{andr:1998}
take this form and recently it has become very important
in machine learning with variational Bayes (VB) \citep{blei:kucu:macu:2017}
and Bayesian optimization \citep{frazier2018tutorial} being prominent examples.

First order optimizers \citep{beck:2017} use a sequence of steps
$\param_{k+1}\gets \param_k-\alpha_k\nabla F(\param_k)$
for step sizes $\alpha_k>0$ and an operator $\nabla$
that we always take to be the gradient with respect to $\param$.
Very commonly, neither $F$ nor its gradient is available at reasonable
computational cost and a Monte Carlo (MC) approximation is used instead.
The update
\begin{align}\label{eq:basicsgd}
\param_{k+1}\gets \param_k-\alpha_k\nabla \hat F(\param_k)
\quad\text{for}\quad
  \nabla \hat F(\param_k) = \frac1n\sum_{i=1}^ng(\bsz_i;\param)
\quad\text{and}\quad
g(\bsz;\param)=\nabla f(\bsz;\param)
\end{align}
for $\bsz_i\simiid p$
is a simple form of stochastic gradient descent (SGD) \citep{duchi2018introductory}.
There are many versions of SGD, notably AdaGrad \citep{duchi2011adaptive}
and Adam \citep{kingma2014adam} which are prominent in optimization
of neural networks among other learning algorithms.
Very often SGD involves sampling a mini-batch of observations.
In this paper we consider SGD that samples instead some
random quantities from a continuous distribution.

While a simple SGD is often very useful, there are settings
where it can be improved.
SGD is known to have slow convergence when the Hessian
of $F$ is ill-conditioned \citep{bottou2018optimization}.
When $\param$ is not very high dimensional, the second
order methods such as Newton iteration using
exact or approximate Hessians can perform better.
Quasi-Newton methods such as BFGS and L-BFGS \citep{nocedal2006numerical}
that we describe in more details below
can handle much higher
dimensional parameters than Newton methods can while
still improving upon first order methods.
We note that
quasi second-order methods cannot be proved to have better convergence
rate than SGD. See \cite{agarwal2012information}.
They can however have a better implied constant.


A second difficulty with SGD is that MC sampling to estimate
the gradient can be error prone or inefficient.
For a survey of MC methods to estimate a gradient
see \cite{mohamed2020monte}.
Improvements based on variance reduction methods
have been adopted to improve SGD.
For instance \cite{paisley2012variational}
and \cite{miller2017reducing} both employ control
variates in VB.
Recently, randomized quasi-Monte Carlo (RQMC)
methods, that we describe below,
have been used in place of MC to improve upon
SGD. Notable examples are \cite{balandat2020botorch}
for Bayesian optimization and \cite{buchholz2018quasi} for VB.
RQMC can greatly improve the accuracy with which
integrals are estimated. The theory in \cite{buchholz2018quasi}
shows how the improved integration accuracy from
RQMC translates into faster optimization for SGD.
The primary benefit is that RQMC can use a much
smaller value of $n$.
This is also seen empirically in \cite{balandat2020botorch} for Bayesian
optimization. 

Our contribution is to combine RQMC with a second order
limited memory method known as L-BFGS. We show theoretically
that improved integration leads to improved optimization.
We show empirically for some VB examples
that the optimization is improved.   We find that RQMC
regularly allows one to use fewer samples per iteration and in
a crossed random effects example it found a better solution
than we got with MC.

At step $k$ of our stochastic optimization
we will use some number $n$ of sample values,
$\bsz_{k,1},\dots,\bsz_{k,n}$.  It is notationally
convenient to group these all together into
$\bigz_k = (\bsz_{k,1},\dots,\bsz_{k,n})$.
We also write
\begin{align}\label{eq:defbarg}
\bar g(\bigz_k;\param) = \frac1n\sum_{i=1}^ng(\bsz_{k,i};\param)
\end{align}
for gradient estimation at step $k$.

The closest works to ours are \cite{balandat2020botorch}
and \cite{buchholz2018quasi}.  Like \cite{balandat2020botorch}
we incorporate RQMC into L-BFGS. Our
algorithm differs in that we take fresh RQMC samples at
each iteration where they had a fixed sample of size
$n$ that they used in a `common random numbers'
approach. They prove consistency as $n\to\infty$
for both MC and RQMC sampling.  Their proof
for RQMC required the recent strong law of large
numbers for RQMC from \cite{sllnrqmc}.
Our analysis incorporates the geometric decay
of estimation error as the number $K$ of
iterations  increases, similar to that used
by \cite{buchholz2018quasi} for SGD.
Our error bounds include sampling variances
through which RQMC brings an advantage.

An outline of this paper is as follows.
Section~\ref{sec:optim} reviews the optimization methods we need.
Section~\ref{sec:rqmc} gives basic properties
of scrambled net sampling, a form of RQMC.
Section~\ref{sec:theory} presents our main theoretical
findings.
The optimality gap after $K$ steps of
stochastic quasi-Newton is $F(\param_K)-F(\param^*)$
where $\param^*$ is the optimal value.
Theorem~\ref{thm: f} bounds
the expected optimality gap by a term that decays exponentially in $K$
plus a second term that is linear in
a measure of sampling variance.
RQMC greatly
reduces that second non-exponentially decaying term
which will often dominate.
A similar bound holds for tail probabilities of the optimality gap.
Theorem~\ref{thm: param}
obtains a comparable bound for $\e(\Vert\param_K-\param^*\Vert^2)$.
Section~\ref{sec:vb} gives numerical examples
on some VB problems.
In  a linear regression problem where the optimal
parameters are known, we verify that RQMC converges
to them at an improved rate in $n$. In logistic regression,
crossed random effects and variational autoencoder
examples we see the second order methods greatly
outperform SGD in terms of wall clock time
to improve $F$.
Since the true parameters
are unknown we cannot compare accuracy
of MC and RQMC sampling algorithms for those examples.
In some examples RQMC finds a better ELBO than
MC does when both use SGD, but the BFGS
algorithms find yet better ELBOs.
Section~\ref{sec:discuss} gives some conclusions.
The proofs of our main results are in an appendix.

\section{Quasi-Newton optimization}\label{sec:optim}

We write $\nabla^2 F(\param)$ for the Hessian matrix of $F(\param)$.
The classic Newton update is
\begin{align}\label{eq:cnewton}
\param_{k+1}\gets \param_k-(\nabla^2F(\param_k))^{-1}\nabla F(\param_k).
\end{align}
Under ideal circumstances it converges quadratically
to the optimal parameter value $\param^*$. That is
$\Vert\param_{k+1}-\param^*\Vert=O(\Vert\param_{k}-\param^*\Vert^2)$.
Newton's method is unsuitable for the problems we consider here because
forming $\nabla^2F\in\real^{d\times d}$ may take too much space and solving
the equation in \eqref{eq:cnewton} can cost $O(d^3)$ which is prohibitively expensive.
The quasi-Newton methods we consider do not form explicit
Hessians.  We also need to consider stochastic versions of them.

\subsection{BFGS and L-BFGS}

The BFGS method is named after four independent discoverers: Broyden,
Fletcher, Goldfarb and Shanno.
See \citet[Chapter 6]{nocedal2006numerical}.
BFGS avoids explicitly computing and inverting the Hessian of $F$.
Instead it maintains at step $k$ an approximation $H_k$ to the
inverse of $\nabla^2F(\param_k)$.
After an initialization such as setting $H_1$ to the identity
matrix, the algorithm updates $\param$ and $H$ via
\begin{align*}
\param_{k+1} & \gets \param_k -\alpha_k H_k\nabla F(\param_k),\quad\text{and}\\
H_{k+1} & \gets \Bigl( I-\frac{s_ky_k\tran}{s_k\tran y_k}\Bigr)
H_k \Bigl( I-\frac{s_ky_k\tran}{s_k\tran y_k}\Bigr) +
\frac{s_ks_k\tran}{s_k\tran y_k}
\end{align*}
respectively, where
\begin{align*}
s_k =\param_{k+1}-\param_k
\quad\text{and}\quad y_k = \nabla F(\param_{k+1})-\nabla F(\param_k).
\end{align*}
The stepsize $\alpha_k$ is found by a line search.

Storing $H_k$ is a burden and the limited-memory BFGS (L-BFGS) algorithm
of \cite{nocedal1980updating} avoids forming $H_k$ explicitly.
Instead it computes
$H_k\nabla F(\param_k)$ using a recursion
based on the $m$ most recent $(s_k,y_k)$ pairs.
See \citet[Algorithm 7.4]{nocedal2006numerical}.

\subsection{Stochastic quasi-Newton}

Ordinarily in quasi-Newton algorithms
the objective function remains constant through
all the iterations. In stochastic quasi-Newton algorithms
the sample points change at each iteration which is like
having the objective function $F$ change in a random
way at iteration $k$.
Despite this \cite{bottou2018optimization} find that
quasi-Newton can work well in simulation-optimization
with these random changes.

We will need to use sample methods to evaluate
gradients. Given $\bigz=(\bsz_1,\dots,\bsz_n)$ the
sample gradient is $\bar g(\bigz;\param)$ as given at \eqref{eq:defbarg}.
Stochastic quasi-Newton algorithms like the one we study
also require randomized Hessian information.

\cite{byrd2016stochastic} develop a stochastic L-BFGS algorithm
in the mini-batch setting.
Instead of adding every correction pair $(s_k,y_k)$ to the buffer after each iteration,
their algorithm updates the buffer every $B$ steps using the averaged correction pairs.
Specifically, after every $B$ iterations, for $k=t\times B$,
it computes the average
parameter value $\bar\param_t=B^{-1}\sum_{j=k-B+1}^k\param_j$
over the most recent $B$ steps.
It then computes the correction pairs by $s_t=\bar\param_t-\bar\param_{t-1}$
and
$$y_t=
\bar g(\wt\bigz_t;\bar\param_t)-\bar g(\wt\bigz_t;\bar\param_{t-1})
=\frac1n\sum_{i=1}^n
\bigl(g(\bar\param_t,\tilde\bsz_{t,i})-g(\bar\param_{t-1},\tilde\bsz_{t,i})\bigr),$$
and adds the  pair $(s_t,y_t)$ to the buffer.
Here $\wt\bigz_t=(\tilde\bsz_{t,1},\dots,\tilde \bsz_{t,n})$
is a random sample of size $n=n_h$
Hessian update samples.
The samples $\wt\bigz_t$ for $t\ge1$ are
completely different from and independent of $\bigz_k$ for $k\ge1$
used to update gradient estimates at step $k$.
This method is called SQN (stochastic quasi-Newton). As suggested by the
authors, $B$ is often taken to be 10 or 20. So one can afford to use relatively
large $n_h$ because the amortized average number of
gradient evaluations per iteration is $n_g + 2n_h/B$.

The objective function from \cite{byrd2016stochastic}
has the form
$F(\param)=(1/N)\sum_{i=1}^Nf(\bsx_i;\param)$, and $\{\bsx_1,\ldots,\bsx_N\}$
is a fixed training set. At each iteration, their random sample is a
subset of $n\ll N$ points in the dataset, not a sample generated from
some continuous distribution.
However, it is straightforward  to adapt their algorithm to our setting. The details
are in Algorithm \ref{alg: qmc-l-bfgs} in Section~\ref{sec:rqmc}.

\subsection{Literature review}
Many authors have studied how to use L-BFGS in stochastic settings.
{\cite{bollapragada2018progressive} proposes several techniques for stochastic L-BFGS,
including increasing the sample size with iterations (progressive batching), choosing
the initial step length for backtracking line search so that the expected value of the
objective function decreases, and computing the correction pairs using overlapping
samples in consecutive steps \citep{berahas2016multi}.
}

{Another way to prevent noisy updates is by a lengthening strategy from
\cite{xie2020analysis} and \cite{shi2020noise}. Classical BFGS would use the
correction pair $(\alpha_kp_k,\,g(\theta_k+\alpha_kp_k,\bsz) - g(\theta_k,\bsz) )$,
where $p_k=-H_kg(\theta_k,\bsz)$ is the update direction, and $\bsz$ encodes the randomness in estimating the gradients. Because the correction pairs may be dominated by noise, \cite{xie2020analysis} suggested the lengthening correction pairs
\[
(s_k,\, y_k)=(\beta_kp_k, \, g(\theta_k+\beta_kp_k) - g(\theta_k)),\quad \text{where }\beta_k\geq\alpha_k.
\]
\cite{shi2020noise} propose to choose $\alpha_k$ by the Armijo-Wolfe condition, while choosing $\beta_k$ large enough so that $[g(\theta_k+\beta_kp_k)-g(\theta_k)]\tran p_k/\|p_k\|$ is sufficiently large.
}

{\cite{gower2016stochastic} utilizes sketching strategies to update the inverse Hessian
approximations by compressed Hessians. \cite{moritz2016linearly} combines the stochastic
quasi-Newton algorithm in \cite{byrd2016stochastic} and stochastic
variance reduced gradients (SVRG)
\citep{johnson2013accelerating} by occasionally computing the gradient using
a full batch. }

{\cite{balandat2020botorch} also applied RQMC with L-BFGS in Bayesian optimization.
At each step of Bayesian optimization, one needs to maximize the acquisition function
of the form $\alpha(\theta):=\EE{\ell(g(\theta))}$, where $g$ is a Gaussian process and
$\ell$ is a loss function. They use the sample average approximation
$\hat\alpha(\theta):=(1/n)\sum_{i=1}^n\ell(\xi_i(\theta))$, where
$\xi_i(\theta)\sim g(\theta)$. They prove that the maximizer of $\hat\alpha$
converges to that of $\alpha$ when $n\goinf$ for both MC and RQMC
sampling under certain conditions. The RQMC result relies on the recent discovery
of the strong law of large numbers for RQMC \citep{sllnrqmc}.
}


\section{Scrambled net sampling}\label{sec:rqmc}

Scrambled nets are a form of RQMC sampling.
We begin by briefly describing plain
quasi-Monte Carlo (QMC) sampling.
QMC is most easily
described for computing expectations of $f(\bsu)$
for $\bsu\sim\dustd[0,1]^s$.
In our present context $f(\cdot)$ will be a component of
$g(\cdot;\param)$ and not the same as the $f$ in equation~\eqref{eq:simopt}.
In practice we must ordinarily transform the random variables
$\bsu$ to some other distribution such as a Gaussian
via $\bsx=\psi(\bsu)$.
We suppose that $\psi(\bsu)\sim p$ for
some transformation $\psi(\cdot)$.
The text by \cite{devr:1986} has many examples of such transformations.
Here we subsume any such transformation $\psi(\cdot)$ into the
definition of $f$.

In QMC sampling, we estimate $\mu =\int_{[0,1]^s}f(\bsu)\rd\bsu$ by
$\hat\mu = (1/n)\sum_{i=1}^nf(\bsu_i)$, just like in MC sampling
except that distinct points $\bsu_i$ are chosen so that the
discrete uniform distribution  $\dustd\{\bsu_1,\dots,\bsu_n\}$ is made
very close to the continuous $\dustd[0,1]^s$ distribution.
The difference between these two distributions can be quantified
in many ways, called discrepancies \citep{chen:sriv:trav:2014}.
For a comprehensive treatment of QMC see \cite{dick:pill:2010} or \cite{nied:1992}
or \cite{dick:kuo:sloa:2013}.

When $f$ is of bounded variation in the sense of Hardy
and Krause (BVHK) (see \cite{variation}) then QMC attains the asymptotic
error rate $|\hat\mu-\mu| = O( n^{-1}\log(n)^{s-1})=O(n^{-1+\epsilon})$
for any $\epsilon>0$.
QMC is deterministic and to get practical error estimates RQMC
methods were introduced. In RQMC, each individual $\bsu_i\sim\dustd[0,1]^d$
while collectively $\bsu_1,\dots,\bsu_n$ still retain the low discrepancy
property.  Uniformity of $\bsu_i$ makes RQMC unbiased: $\e(\hat\mu)=\mu$.
Then if $f\in\bvhk$ we get an RMSE of $O(n^{-1+\epsilon})$.
The whole RQMC process can then be replicated independently
to quantify uncertainty.
See \cite{cran:patt:1976} and \cite{rtms} for methods and a survey in \cite{lecu:lemi:2002}.

Scrambled net sampling \citep{rtms} is a form of RQMC that operates
by randomly permuting the bits (more generally digits) of QMC
methods called digital nets.  The best known are those of \cite{sobo:1969}
and \cite{faur:1982}.
In addition to error estimation, scrambled nets give the user
some control over  the powers of $\log(n)$ in the QMC rate
and also extend the domain of QMC from Riemann integrable functions
of which BVHK is a subset to much more general
functions including some with integrable singularities.
For any integrand $f\in L^2[0,1]^s$,  MC has RMSE $O(n^{-1/2})$
while scrambled nets have RMSE $o(n^{-1/2})$
without requiring $f\in\bvhk$ or even that $f$
is Riemann integrable \citep{snetvar}.
For fixed $n$, each construction of scrambled nets
has a `gain constant' $\Gamma<\infty$
so that the RMSE is below
$\sqrt{\Gamma} n^{-1/2}$ for any $f\in L^2[0,1]^s$.
This effectively counters  the powers of $\log(n)$.
For smooth enough $f$, an error cancellation
phenomenon for scrambled nets yields
an RMSE of $O(n^{-3/2}\log(n)^{(s-1)/2})
=O(n^{-3/2+\epsilon})$ \citep{smoovar,yue:mao:1999,localanti}.
The logarithmic powers here cannot `set in' until
they are small enough to obey the $\Gamma^{1/2}n^{-1/2}$ upper bound.
Some forms of scrambled net sampling satisfy
a central limit theorem \citep{loh:2003,basu:mukh:2017}.

Very recently, a strong law of large numbers
$$
\Pr\Bigl( \lim_{n\to\infty}\hat\mu_n =\mu\Bigr)=1
$$
has been proved for scrambled net sampling
assuming only that $f\in L^{1+\delta}[0,1]^s$ for
some $\delta>0$ \citep{sllnrqmc}.  The motivation for this
result was that \cite{balandat2020botorch}
needed a strong law of large numbers  to prove consistency for their use for
scrambled nets in Bayesian optimization.

Figure~\ref{fig:sobolthing} graphically compares MC, QMC and RQMC points
for $s=2$.  The underlying QMC method is a Sobol' sequence
using `direction numbers' from \cite{joe:kuo:2008}.
We can see that MC points leave voids and create clumps.
The QMC points are spread out more equally and show
a strong diagonal structure.  The RQMC points satisfy
the same discrepancy bounds as the QMC points do but
have broken up some of the structure.

\begin{figure}
\centering
\includegraphics[width=.9\hsize]{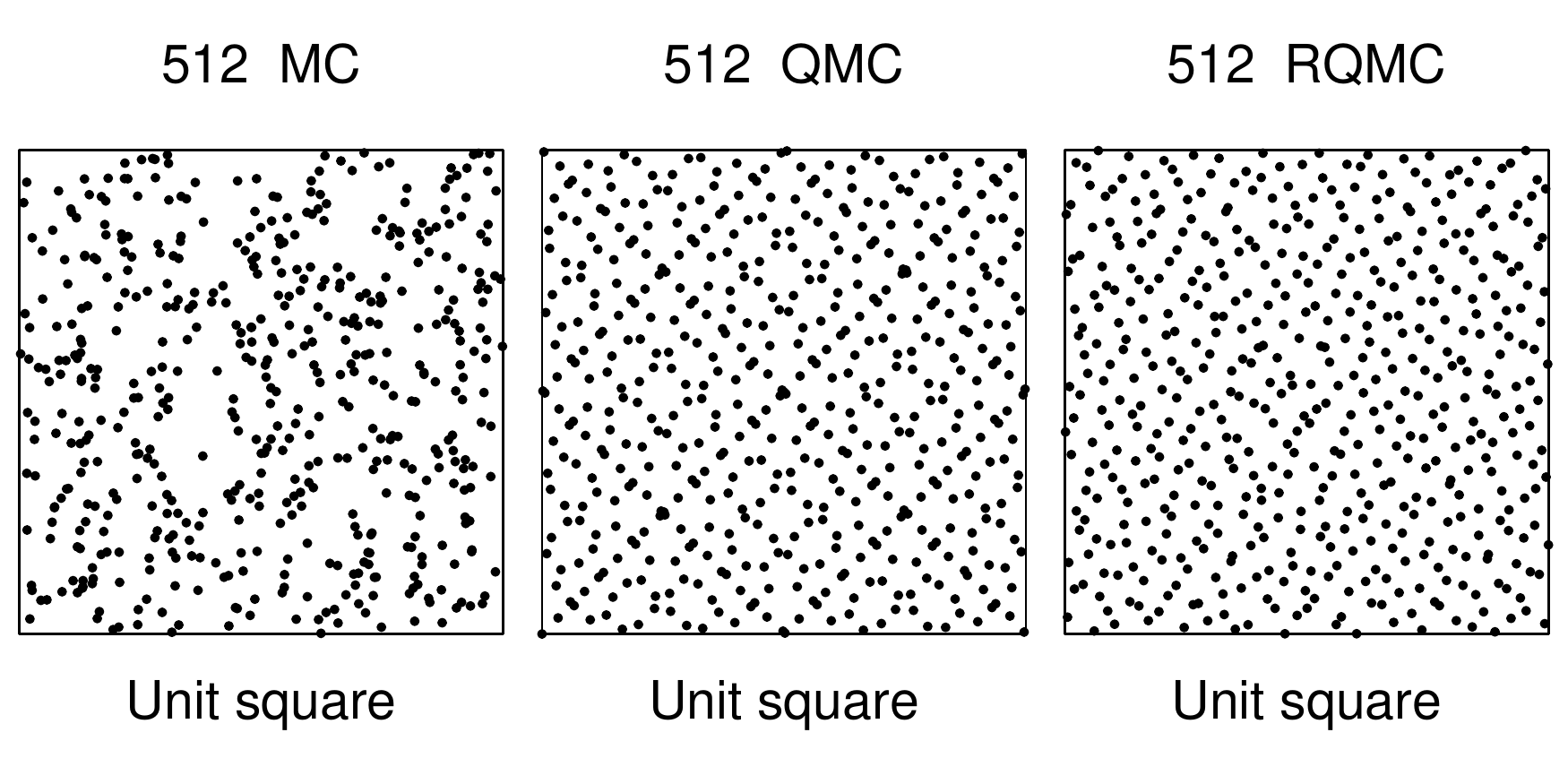}
\caption{\label{fig:sobolthing}
Each panel shows $512$ points in $[0,1]^2$.
From left to right they are plain MC,
Sobol' points and scrambled Sobol' points.
From \cite{sllnrqmc}: Copyright \textcopyright\ 2021 Society for Industrial and Applied Mathematics.  Reprinted with permission.  All rights reserved.
}
\end{figure}

In favorable settings the empirical behaviour of QMC and RQMC for realistic $n$
can be as good as their asymptotic rates.
In less favorable settings RQMC can
be like MC with some reduced variance.
The favorable integrands are those where $f$ is nearly additive or
at least dominated by sums of only a few of their
inputs at a time.  See \cite{cafl:moro:owen:1997} for
a definition of functions of `low effective dimension'
and \cite{dick:kuo:sloa:2013} for a survey of work on reproducing
kernel Hilbert spaces of favorable integrands.

We propose to combine the stochastic quasi-Newton method with RQMC
samples to create a randomized quasi-stochastic quasi-Newton  (RQSQN)
algorithm. 
At the $k$'th iteration, we draw an independently scrambled
refreshing sample
$\bigz_k=(\bsz_{k,1},\dots,\bsz_{k,n_g})$
of size $n=n_g$ via
RQMC to compute the gradient estimator $\bar g(\bigz_k;\param_k)$.
Then we find the descent direction $H_k\bar g(\bigz_k;\param_k)$
using an L-BFGS two-loop recursion.
Then we update the solution by
$$\param_{k+1}\gets\param_k-\alpha_kH_k\bar g(\bigz_k;\param_k).$$
Here $\alpha_k$
 may be found by line search with the Wolfe condition when using L-BFGS. See Chapter 3.1 in \cite{nocedal2006numerical}.

Algorithm \ref{alg: qmc-l-bfgs} shows pseudo-code for
an RQMC version of SQN based on L-BFGS.
It resembles the SQN algorithm
in \cite{byrd2016stochastic}, except that the random samples are
drawn by using RQMC instead of being sampled without
replacement from a finite data set.
Note that we don't compute the Hessian directly.
We either compute the Hessian-vector product
$\nabla^2 f(\bigz_t;\bar\param_t)s_t$ or use
gradient differences $\nabla f(\bar\theta_t)-\nabla f(\bar\theta_{t-1})$.



\begin{algorithm}[t]
\SetKwInOut{Input}{Input}
\SetKwInOut{Output}{Output}
\Input{Initialization $\param_1$, buffer size $m$, Hessian update interval $B$, sample sizes $n_g$ and $n_h$ for estimating gradient and updating buffer}
 \Output{Solution $\param$}
$t\gets-1$\;
 \For{$k=1,2,\ldots$}{
 Take an RQMC sample $\bsz_{k,1},\ldots,\bsz_{k,n_g}\sim p$\;
 Calculate the gradient estimator
$g_k\gets\bar g(\bigz_k;\param_k)=
\frac{1}{n_g}\sum_{i=1}^{n_g}g(\bsz_{k,i};\param_k)$\;
 \eIf{$t<1$}{
 $\param_{k+1}\gets\param_k-\alpha_k g_k$\;
 }
 {
 Find $H_tg_k$ by the two-loop recursion with memory size $m$\;
 Find $\alpha_k$ by line search\;
 Update $\param_{k+1}\gets\param_k-\alpha_kH_tg_k $\;
 }

\If{$\mod(k,B)=0$}{
  $t\gets t+1$\;
  $\bar\param_t\gets B^{-1}\sum_{j=k-B+1}^k\param_j$\;
  \If{$t>0$}{

  Take an RQMC sample $\tilde\bsz_{t,1},\ldots,\tilde\bsz_{t,n_h}\sim p$\;
  Add $s_t=(\bar\param_t-\bar\param_{t-1})$, $y_t=\bar g(\wt\bigz_t;\bar\param_t)=\frac{1}{n_h}\sum_{i=1}^{n_h}\nabla^2 f(\tilde\bsz_{t,i};\bar\param_t)s_t$ to the buffer\;
  }
}{}
}
 \caption{RQMC-SQN}
\label{alg: qmc-l-bfgs}
\end{algorithm}

\section{Theoretical guarantees}\label{sec:theory}


In this section, we study the convergence rate of a general quasi-Newton iteration
based on $n$ sample points $\bsz_{k1},\dots,\bsz_{kn}\sim p$
at stage $k$.
The algorithm iterates as follows
\begin{align}\label{eq:sqnupdate}
\param_{k+1}&\gets\param_k-\alpha_kH_k\nabla \bar f(\bigz_k;\param_k),\quad\text{where}
\\
\nabla\bar f(\bigz_k;\param_k) &= \frac1n\sum_{i=1}^n \nabla f(\bsz_{ki};\param_k)
= \bar g(\bigz_k;\param_k).
\notag
\end{align}
Here $H_k$ is an approximate inverse Hessian.
We let $F(\param)=\e(f(\bsz_{ki};\param))$.
We assume that the gradient estimator $g(\bigz_{k};\param_k)$
is unbiased conditionally on $\param_k$, i.e.,
$\e(g(\bigz_{k};\param_k) \mid\param_k) =\nabla F(\param_k)$.
The $\param_k$ are random because they depend on $\bigz_{k'}$
for $k'<k$.


The Hessian estimates $H_k$ for $k>1$ are also random because they depend
directly on some additional inputs $\tilde\bigz_{t}$
for those Hessian update epochs $t$ which occur prior to step $k$.
They also depend on $\bigz_{k'}$ for $k'<k$ because
those random variables affect $\param_{k'}$.
In our algorithms $H_k$ is independent of $\param_k$.
For RQMC this involves fresh rerandomizations of the underlying
QMC points at step $k$.  The alternative of `going deeper'
into a given RQMC sequence using points $(k-1)n+1$
through $kn$ would not satisfy independence.
While it might possibly perform better it is harder to analyze
and we saw little difference empirically in doing that.
Our theory requires regularity of the problem as follows.

\begin{assumption}\label{ass:regularity}
We impose these three conditions:
\begin{enumerate}[(a)]
\item Strong convexity. For some $c>0$,
$$F(\param')\geq F(\param)+\nabla F(\param)\tran(\param'-\param)+\frac{c}{2}\|\param'-\param\|_2^2\quad\text{for all}\quad\param,\param'\in\Param.$$

\item Lipschitz continuous objective gradients. For some $L<\infty$,
$$\|\nabla F(\param)-\nabla F(\param')\|\leq L\|\param-\param'\|
\quad\text{for all}\quad\param,\param'\in\Param.$$

\item Bounded variance of the gradient estimator. For some $M<\infty$,
$$\tr\bigl(\var(\bar g(\bigz_k;\param _k)\giv\param_k)\bigr)\leq M\quad\text{for all}\quad k\geq1.$$
\end{enumerate}
\end{assumption}

These are standard assumptions in the study of SGD. For example, see Assumptions 4.1 and 4.5 of \cite{bottou2018optimization}.
Strong convexity implies that there exists a unique minimizer $\param^*$ to estimate.
We write  $F^*=F(\param^*)$, for the best possible value of $F$.
We must have some smoothness and Lipshitz continuity
is a mild assumption.  The quantity $M$ will prove to be
important below. We can get a better $M$ from RQMC than from MC.
The other way to reduce $M$ is to increase $n$.  When RQMC has an
advantage it is because it gets a smaller $M$ for the same $n$,
or to put it another way, it can get comparable $M$ with smaller $n$.

For two symmetric matrices $A$ and $B$, $A\preccurlyeq B$ means that $B-A$ is positive semi-definite.
We denote the spectral norm of $A$ by $\|A\|$.

\begin{theorem}[Convergence of optimality gap]\label{thm: f}


Suppose that our simulation-optimization problem
satisfies the regularity conditions in  Assumption \ref{ass:regularity}.
Assume that we run updates as in equation~\eqref{eq:sqnupdate}
where  the approximate inverse Hessian matrices $H_k$ satisfy
$h_1 I\preccurlyeq H_k\preccurlyeq h_2 I$
for some $0<h_1\leq h_2$ and all $k\geq1$.
Next, assume constant step sizes $\alpha_k=\alpha$
with $0<\alpha\le h_1/(Lh_2^2)$.
Then for every $K\geq1$
\begin{align}
\e(F(\param_{K})-F^*)&\leq(1-\alpha ch_1)^{K}(F(\param_0)-F^*)+\frac{\alpha Lh_2^2}{2ch_1}M.
\label{eq: bound expectation}
\end{align}
Furthermore, if $\|g(\param,\bsz)\|\leq C$ for some constant $C$ for all $\param$ and $\bsz$, then for any $\ep>0$
\begin{align}\label{eq: bound finite sample}
F(\param_{K})-F^*&\leq(1-\alpha c h_1)^{K}(F(\param_0)-F^*)
+\frac{\alpha L h_2^2}{2ch_1}M
+C^2\sqrt{\frac{2\alpha}{ch_1}}\bigl(h_2-L\alpha h_1^2+{h_1}{}\bigr)\ep
\end{align}
holds with probability at least $1-e^{-\ep^2}$.
\end{theorem}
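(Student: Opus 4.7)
The plan is to follow the standard analysis of strongly convex stochastic gradient descent (as in \cite{bottou2018optimization}) adapted to the random preconditioner $H_k$, and then promote the resulting one-step contraction to a high-probability statement via Azuma--Hoeffding applied to the natural martingale differences.

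First, apply the Lipschitz-gradient descent inequality to the update $\param_{k+1}-\param_k = -\alpha H_k \bar g(\bigz_k;\param_k)$ and condition on $\mathcal F_k$. Because $H_k$ is constructed from Hessian batches $\wt\bigz_t$ from epochs strictly earlier than $k$, it is independent of $\bigz_k$ given $\param_k$, so the linear term collapses to $-\alpha \nabla F(\param_k)^\top H_k \nabla F(\param_k)$ and the quadratic term becomes $(L\alpha^2/2)\,\e[\bar g^\top H_k^2 \bar g \mid \mathcal F_k]$. Using $h_1 I \preccurlyeq H_k \preccurlyeq h_2 I$ together with assumption (c) and invoking $\alpha \leq h_1/(L h_2^2)$ to absorb the $\|\nabla F(\param_k)\|^2$ coefficient below $-\alpha h_1/2$ yields
\begin{equation*}
\e[F(\param_{k+1})-F^{*}\mid \mathcal F_k]
\leq F(\param_k)-F^{*} - \tfrac{\alpha h_1}{2}\|\nabla F(\param_k)\|^2 + \tfrac{L\alpha^2 h_2^2}{2} M.
\end{equation*}
The Polyak--{\L}ojasiewicz inequality $\|\nabla F(\param)\|^2\geq 2c(F(\param)-F^{*})$, a standard consequence of strong convexity (a), then gives the one-step contraction $\e[Y_{k+1}\mid\mathcal F_k]\leq (1-\alpha c h_1)Y_k + L\alpha^2 h_2^2 M/2$ for $Y_k:=F(\param_k)-F^{*}$. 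Iterating and bounding the geometric sum $\sum_{k=0}^{K-1}(1-\alpha c h_1)^k \leq 1/(\alpha c h_1)$ delivers \eqref{eq: bound expectation}.

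For the tail bound, introduce the genuine martingale increments $\Delta_k := F(\param_k)-\e[F(\param_k)\mid\mathcal F_{k-1}]$ and rewrite the iteration as $Y_k \leq (1-\alpha c h_1)Y_{k-1} + L\alpha^2 h_2^2 M/2 + \Delta_k$, which unrolls to
\begin{equation*}
Y_K \leq (1-\alpha c h_1)^K Y_0 + \frac{\alpha L h_2^2 M}{2 c h_1} + \sum_{k=1}^K (1-\alpha c h_1)^{K-k}\Delta_k.
\end{equation*}
The hypothesis $\|g(\bsz;\param)\|\leq C$ yields $\|\bar g\|\leq C$, $\|\nabla F(\param_k)\|\leq C$ (since $\nabla F=\e[g]$), and $\|\param_k-\param_{k-1}\|\leq \alpha h_2 C$. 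Combining the Lipschitz-gradient upper bound and the strong-convexity lower bound on $F(\param_k)-F(\param_{k-1})$, and carefully isolating the deterministic conditional mean from the zero-mean fluctuation, then gives the almost-sure bound $|\Delta_k|\leq C^2\alpha(h_1+h_2-L\alpha h_1^2)$. Since $\sum_{k=1}^K(1-\alpha c h_1)^{2(K-k)}\leq 1/(\alpha c h_1)$, Azuma--Hoeffding bounds the tail of the weighted martingale by $\exp(-t^2\alpha c h_1/(2C^4\alpha^2(h_1+h_2-L\alpha h_1^2)^2))$; equating this to $e^{-\epsilon^2}$ and solving for $t$ recovers exactly the residual term in \eqref{eq: bound finite sample}.

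The main technical obstacle is the almost-sure bound on $|\Delta_k|$ in the previous step. A direct triangle inequality on $F(\param_k)-\e[F(\param_k)\mid\mathcal F_{k-1}]$ only yields something like $2\alpha h_2 C^2 + L\alpha^2 h_2^2 C^2/2$, losing both the $h_1$ contribution and the favorable subtraction $-L\alpha h_1^2$. Matching the stated constant requires being careful about which side of the Lipschitz/strong-convexity sandwich is used at each occurrence of $F(\param_k)-F(\param_{k-1})$, and exploiting the quadratic-form lower bounds $\nabla F(\param_{k-1})^\top H_{k-1}\nabla F(\param_{k-1})\geq h_1\|\nabla F(\param_{k-1})\|^2$ and $\e[\bar g^\top H_{k-1}^2\bar g\mid \mathcal F_{k-1}]\geq h_1^2\|\nabla F(\param_{k-1})\|^2$, so that the residual fluctuation is controlled by the $h_2-h_1$ slack rather than by the raw $h_2$.
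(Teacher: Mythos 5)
Your proof of the expectation bound \eqref{eq: bound expectation} is correct and is essentially the paper's argument (descent lemma, conditional unbiasedness, the spectral bounds on $H_k$, the step-size condition, the Polyak--{\L}ojasiewicz consequence of strong convexity, and a geometric sum), so that part stands.

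The tail bound \eqref{eq: bound finite sample} has a genuine gap, exactly at the step you yourself flag as the main obstacle: the claimed almost-sure bound $|\Delta_k|\le C^2\alpha(h_1+h_2-L\alpha h_1^2)$. (There is also an indexing slip --- $\param_k$ is $\mathcal{F}_{k-1}$-measurable, so $F(\param_k)-\e(F(\param_k)\giv\mathcal{F}_{k-1})\equiv 0$ as written --- but that is cosmetic.) The dominant fluctuation of $F(\param_k)$ about its conditional mean is the bilinear term $-\alpha\nabla F(\param_{k-1})\tran H_{k-1}e_{k-1}$ with $e_{k-1}=\bar g-\nabla F$. The quadratic-form lower bounds $x\tran Hx\ge h_1\|x\|^2$ you propose to exploit say nothing about this cross term, and under $\|g\|\le C$ alone it can come arbitrarily close to $2\alpha h_2C^2$ in magnitude: take $F$ quadratic so the descent lemma is an equality, $H_{k-1}=h_2I$, $\|\nabla F(\param_{k-1})\|=C-\delta$, and let $\bar g=-C\,\nabla F/\|\nabla F\|$ on an event of probability $\delta/(2C)$ with a compensating value otherwise (this respects unbiasedness and $\|\bar g\|\le C$); then $-\nabla F\tran H_{k-1}e_{k-1}\to 2h_2C^2$ as $\delta\to 0$. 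Since $h_1+h_2-L\alpha h_1^2<2h_2$ always, your claimed increment bound is false, and your route can only deliver a final coefficient of order $2h_2$ rather than the stated $h_2-L\alpha h_1^2+h_1$. The paper avoids this by taking as its martingale difference the explicit expression $\nu_k=-\alpha\nabla F(\param_k)\tran(H_k-L\alpha H_k^2)e_k+\tfrac{L\alpha^2h_2^2}{2}(\|e_k\|^2-V_k)$ appearing in the one-sided recursion $F(\param_{k+1})-F^*\le(1-\alpha ch_1)(F(\param_k)-F^*)+\nu_k+\tfrac{L\alpha^2h_2^2}{2}V_k$, bounding $|\nu_k|\le 2\alpha(h_2-L\alpha h_1^2+L\alpha h_2^2)C^2$ via $\|H_k-L\alpha H_k^2\|\le h_2-L\alpha h_1^2$, and then absorbing the leading factor of $2$ through the particular form of Azuma--Hoeffding it invokes together with $L\alpha h_2^2\le h_1$. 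To complete your argument you should either switch to the paper's $\nu_k$ or accept the weaker constant.
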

\begin{proof}
See Section \ref{sec: proof 1} in the Appendix.
\end{proof}
\begin{remark}
As $K\goinf$ the
expected optimality gap is no larger than $[(\alpha Lh_2^2)/(2ch_1)]\times M$.
The variance bound $M=M(n)$ (for $n$ points $\bsz_{k,i}$)
depends on the sampling method we choose.
From the results in Section~\ref{sec:rqmc},
scrambled nets can reduce $M$ from $O(1/n)$ for MC to $o(1/n)$
attaining $O(n^{-2+\epsilon})$ or even
$O(n^{-3+\epsilon})$ in favorable cases. When $M<\infty$
for MC then $M\le \Gamma M$ for scrambled net RQMC,
which limits the harm if any that could come from RQMC.
\end{remark}

\begin{remark}
By Lemma 3.1 of \cite{byrd2016stochastic}, the L-BFGS iteration satisfies
$h_1 I\preccurlyeq H_k\preccurlyeq h_2 I$ under weaker conditions
than we have in Theorem \ref{thm: f}.
We can replace the  bound $\Vert g(\bsz;\param)\Vert\le C$
by $\e(\|g(\bsz;\param)\|^2)\leq C^2<\infty$.
\end{remark}

The following theorem states the convergence rate of $\|\param_k-\param^*\|$.
\begin{theorem}[Convergence of variables]
\label{thm: param}
Under the conditions of Theorem \ref{thm: f}, suppose that
$$
\frac{c}L>\frac{h_2-h_1}{h_2+h_1}\quad\text{and}\quad
0<\alpha_k < \frac{(h_1+h_2)c-(h_2-h_1)L}{2L^2h_2^2}.$$
Then for every $K\ge 1$,
\begin{align*}
\e(\Vert \param_{K}-\param^*\Vert^2)
\leq \bigl(1-\alpha^2h_2^2L^2\bigr)^K\Vert\param_0-\param^*\Vert^2+M/L^2.
\end{align*}
\end{theorem}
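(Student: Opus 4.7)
The plan is to mirror the standard one-step analysis for strongly convex stochastic optimization, adapted to the preconditioning by $H_k$. Starting from the update $\param_{k+1}-\param^*=(\param_k-\param^*)-\alpha_k H_k\bar g(\bigz_k;\param_k)$, I would expand $\|\param_{k+1}-\param^*\|^2$, condition on $\param_k$ and $H_k$ (which, by the construction of Algorithm~\ref{alg: qmc-l-bfgs}, are independent of the fresh sample $\bigz_k$), and use $\e(\bar g(\bigz_k;\param_k)\giv\param_k)=\nabla F(\param_k)$ together with the bias-variance split $\e(\|H_k\bar g\|^2\giv\param_k,H_k)=\|H_k\nabla F(\param_k)\|^2+\tr(H_k^2\var(\bar g\giv\param_k))$. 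This yields
\begin{align*}
\e(\|\param_{k+1}-\param^*\|^2\giv\param_k,H_k)
&\leq \|\param_k-\param^*\|^2-2\alpha_k(\param_k-\param^*)\tran H_k\nabla F(\param_k)\\
&\quad +\alpha_k^2\|H_k\nabla F(\param_k)\|^2+\alpha_k^2 h_2^2 M,
\end{align*}
using $h_1 I\preccurlyeq H_k\preccurlyeq h_2 I$ and $\tr(\var(\bar g))\leq M$ for the last term.

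The main obstacle, and the reason the hypothesis $c/L>(h_2-h_1)/(h_2+h_1)$ appears, is obtaining a lower bound on the cross term $(\param_k-\param^*)\tran H_k\nabla F(\param_k)$ that is proportional to $\|\param_k-\param^*\|^2$. Since $\param_k-\param^*$ and $\nabla F(\param_k)$ need not be eigenvector-aligned with $H_k$, I would split $H_k=\tfrac{h_1+h_2}{2}I+(H_k-\tfrac{h_1+h_2}{2}I)$, bound the isotropic piece below by strong convexity ($\nabla F(\param^*)=0$ and $(\param_k-\param^*)\tran\nabla F(\param_k)\geq c\|\param_k-\param^*\|^2$), and bound the deviation piece in absolute value using $\|H_k-\tfrac{h_1+h_2}{2}I\|\leq \tfrac{h_2-h_1}{2}$ together with Cauchy--Schwarz and $\|\nabla F(\param_k)\|\leq L\|\param_k-\param^*\|$. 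This produces
\[
(\param_k-\param^*)\tran H_k\nabla F(\param_k)\;\geq\;\tfrac12\bigl[(h_1+h_2)c-(h_2-h_1)L\bigr]\,\|\param_k-\param^*\|^2,
\]
which is positive precisely under the assumed ratio condition on $c/L$.

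Plugging this lower bound in, and using $\|H_k\nabla F(\param_k)\|^2\leq h_2^2L^2\|\param_k-\param^*\|^2$, the one-step contraction coefficient becomes $1-\alpha_k\bigl[(h_1+h_2)c-(h_2-h_1)L\bigr]+\alpha_k^2 h_2^2 L^2$. The step-size upper bound $\alpha_k<\bigl[(h_1+h_2)c-(h_2-h_1)L\bigr]/(2L^2h_2^2)$ is exactly what is needed so that this coefficient is bounded above by $1-\alpha^2 h_2^2 L^2$, giving
\[
\e(\|\param_{k+1}-\param^*\|^2\giv\param_k,H_k)\;\leq\;(1-\alpha^2 h_2^2 L^2)\|\param_k-\param^*\|^2+\alpha^2 h_2^2 M.
\]

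Finally I would take total expectation, unroll the recursion $K$ times, and sum the resulting geometric series $\sum_{k=0}^{K-1}(1-\alpha^2h_2^2L^2)^k\leq 1/(\alpha^2h_2^2L^2)$. The noise contribution collapses to $\alpha^2h_2^2M\cdot 1/(\alpha^2 h_2^2 L^2)=M/L^2$, yielding the stated bound. This is the same template as the proof of Theorem~\ref{thm: f}, so the only genuinely new work is the lower bound on the $H_k$-weighted inner product above; everything else is bookkeeping.
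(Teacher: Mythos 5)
Your proposal is correct and follows the same overall template as the paper's proof: the same expansion of $\|\param_{k+1}-\param^*\|^2$, the same conditional-expectation step exploiting unbiasedness and $\tr(\var(\bar g))\le M$, the same contraction coefficient $1-\alpha[(h_1+h_2)c-(h_2-h_1)L]+\alpha^2h_2^2L^2$, and the same use of the step-size bound to dominate it by $1-\alpha^2h_2^2L^2$ before unrolling the geometric series. The one place you genuinely diverge is the lower bound on $(\param_k-\param^*)\tran H_k\nabla F(\param_k)$: the paper isolates this as Lemma~\ref{lem:udvbound} and proves it by diagonalizing $H_k$ and splitting the bilinear form into the coordinates where $u_iv_i$ is positive versus negative, whereas you split $H_k=\tfrac{h_1+h_2}{2}I+(H_k-\tfrac{h_1+h_2}{2}I)$ and control the deviation by its spectral norm $\tfrac{h_2-h_1}{2}$ together with Cauchy--Schwarz. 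The two arguments yield the identical inequality; yours is arguably cleaner and makes the role of the condition $c/L>(h_2-h_1)/(h_2+h_1)$ (positivity of the isotropic gain minus the worst-case anisotropic loss) more transparent, while the paper's version is stated as a standalone lemma about bilinear forms. No gaps: your bias--variance split of $\e(\|H_k\bar g\|^2)$ is a slightly sharper bookkeeping than the paper's direct bound $h_2^2(M+\|\nabla F(\param_k)\|^2)$, but it lands in the same place.
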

\begin{proof}
See Section \ref{sec: proof 2} in the Appendix.
\end{proof}

\begin{remark}
When $K\goinf$, the limiting expected squared error is bounded by
$\alpha^2h_2^2M$. Once again the potential gain from RQMC is in reducing $M$
compared to MC or getting a good $M$ with smaller~$n$.
\end{remark}

\section{Variational Bayes}\label{sec:vb}

In this section we investigate quasi-Newton quasi-Monte Carlo optimization
for some VB problems.
Variational Bayes begins with a posterior distribution $p(\bsz\giv\bsx)$
that is too inconvenient to work with. This usually means
that we cannot readily sample $\bsz$.
We turn instead to a distribution $q_{\param}$ for $\param\in\Param$
from which we can easily sample $\bsz\sim q_{\param}$.
We now want to make a good choice of $\param$ and
in VB the optimal value $\param^*$
is taken to be the minimizer of the KL divergence
between distributions:
\begin{align*}
\param^* = \argmin_{\param\in\Param}\,\KL( q_\param(\bsz)\, \|\, p(\bsz\giv\bsx)).
\end{align*}

In this section we are using the symbol $p(\cdot)$ as a generic
probability distribution.  It is the distribution of whatever random variable's
symbol is inside the parentheses and not necessarily the same $p$ from the introduction.
We use $\e_\param(\cdot)$ to denote expectation with respect to $\bsz\sim q_\param$.

We suppose that $\bsz$ has a prior distribution $p(\bsz)$. Then Bayes rule gives
\begin{align*}
\KL(q_\param(\bsz)\, \|\, p(\bsz\giv\bsx))&=
\e_{\param}\bigl({\log q_\param(\bsz)-\log p(\bsz\giv\bsx)}\bigr)\\
&=\e_{\param}\Bigl({\log q_\param(\bsz)-\log \frac{p(\bsx\giv\bsz)p(\bsz)}{p(\bsx)}}\Bigr)\\
&=\KL(q_\param(\bsz)\,\|\, p(\bsz))-\e_{\param}(p(\bsx\giv\bsz))+\log p(\bsx).
\end{align*}
The last term does not depend on $\param$ and so
we may minimize $\KL(\cdot\,\|\,\cdot)$ by maximizing
  \[
    \calL(\param)=\e_\param({\log p(\bsx\giv\bsz)})-\KL(q_\param(\bsz)\,\|\, p(\bsz)).
  \]
This $\calL(\cdot)$ is known as the evidence lower bound (ELBO).
The first term $\e_\param({\log p(\bsx\giv\bsz)})$ expresses
a preference for $\param$ having a large value of the likelihood of
the observed data $\bsx$ given the latent data $\bsz$.
The second term $-\KL(q_\param(\bsz\,\|\, p(\bsz))$
can be regarded as a regularization, penalizing  parameter values
for which $q_\param(\bsz)$ is too far from the prior distribution $p(\bsz)$.

To optimize $\calL(\param)$ we need $\nabla\calL(\param)$.
It is usual to choose a family $q_\param$ for which
$\KL(\cdot\,\|\,\cdot)$ and its gradient are analytically tractable.
We still need to estimate the gradient of the first term, i.e.,
\begin{align*}
\nabla\, \e_{\param}({\log p(\bsx\giv\bsz)}).
\end{align*}
One method is to use the score function $\nabla\log p_\param(z)$ and Fisher's identity
\begin{align*}
\nabla \e_\param({f(\bsz)})=\e_\param({f(\bsz)\nabla\log p_\param(\bsz)})
\end{align*}
with $f(\cdot) = \log(p(\bsx\giv\cdot))$.
Unfortunately an MC strategy based on this approach can suffer from large variance.

The most commonly used method is to write
the parameter $\param$ as a function of some underlying
common random variables.
This is known as the reparameterization trick
\citep{kingma2014auto}. Suppose that there is a base distribution $p_0$ and
a transformation $T(\cdot;\param)$, such that if $\bsz\sim p_0$, then
 $T(\bsz;\param)\sim q_\param$. Then
\begin{align*}
\nabla \e_\param({f(\param)})=\e_{p_0}(\nabla  f(T(\bsz;\param))).
\end{align*}
It is often easy to sample from the base distribution $p_0$, and thus to
approximate the expectation by MC or RQMC samples.
This is the method we use in our examples.

\subsection{Bayesian linear regression}

We start with a toy example where we can find $\param^*$ analytically.
This will let us study $\Vert\param_k-\param^*\Vert$ empirically.
We consider the hierarchical linear model
\begin{align*}
\bsy\giv\beta\sim\dnorm(X\beta,\gamma^2I_N)
\quad\text{for}\quad
\beta\sim\dnorm(0,I_d)
\end{align*}
where $X\in\real^{N\times d}$ is a given matrix of full rank $d\le N$
and $\gamma^2\in(0,\infty)$ is a known error variance.
Here $N$ is the number of data points in our simulation
and not the number $n$ of MC or RQMC gradient evaluations.
The entries in $X$ are IID $\dnorm(0,1)$ random variables
and we used $\gamma =0.5$.


Translating this problem into the VB setup,
we make our latent variable $\bsz$ the unknown parameter vector $(\beta_1,\dots,\beta_d)$,
and we choose a very convenient variational distribution $q_\param$ with
$\beta_j\simind\dnorm(\mu_j,\sigma_j^2)$  for $j=1,\ldots,d$.
Now $\param = (\mu_1,\dots,\mu_d,\sigma_1,\dots,\sigma_d)$,
and $\bsy$ plays the role of the observations $\bsx$.
We also write $\mu = (\mu_1,\dots,\mu_d)$ and $\sigma=(\sigma_1,\dots,\sigma_d)$
for the parts of $\param$.

The ELBO has the expression
\begin{align*}
\calL(\param)
&=\e_\param({\log p(\bsy\giv\beta)})-\KL(q_\param(\beta)\,\|\, p(\beta\giv \bsy))\\
&=\e_\param({\log p(\bsy\giv\beta)})-\sum_{j=1}^d\Bigl(\frac{\sigma_j^2+\mu_j^2-1}{2}-\log\sigma_j\Bigr),
\end{align*}
where $\log p(\bsy\giv\beta)=-(p/2)\log(2\pi\gamma^2)-\|\bsy-X\beta\|_2^2/(2\gamma^2)$.
In this example, the ELBO has a
closed form and the optimal variational parameters are given by
\begin{align*}
\mu^*&=\Bigl(\frac{X\tran X}{\gamma^2}+I_d\Bigr)^{-1}\frac{X\tran \bsy}{\gamma^2}
\quad\text{and}\quad
\sigma^*_{j}=\Bigl(1+\frac{\|X_{\sumdot j}\|^2}{\gamma^2}\Bigr)^{-1/2},
\end{align*}
where $X_{\sumdot j}\in\real^N$ is the $j$'th column of $X$.

In this setting the Hessian is simply
$-X\tran X/\gamma^2$ and so stochastic quasi-Newton gradient
estimates are not needed.
We can however compare the effectiveness
of MC and RQMC in SGD.
We estimate the gradient by MC or RQMC
samples and use SGD via AdaGrad \citep{duchi2011adaptive}
to maximize the ELBO.

Our computations used  one example data set
with $N=300$ data points, $d=100$ variables and $K=1000$
iterations.
At each iteration, we draw a new sample of sample size $n$
of the $d$-dimensional Gaussian used to sample $\beta$.
The sample size $n$ is fixed
in each run, but we vary it between
over the range $8\le n\le 8192$ through powers
of $2$ in order to explore how quickly
MC and RQMC converge.

For RQMC, we use the
scrambled Sobol'  points implemented in PyTorch \citep{balandat2020botorch}
using the inverse Gaussian CDF $\psi(\cdot)=\Phi^{-1}(\cdot)$
to translate uniform random variables into standard Gaussians
that are then multiplied by $\sigma_j$ and shifted by $\mu_j$
to get the random $\beta_j$ that we need.
We compute the log errors
$\log_2\Vert \calL_k-\calL^*\Vert$ and $\log_2\|\param_k-\param^*\|$ and
average these over the last 50 iterations.
The learning rate in AdaGrad was taken to be $1$.

The results are shown in Figure \ref{fig: linear regression}.
We see there that RQMC achieves a
higher accuracy than plain MC. This happens because RQMC
estimates the gradient with lower variance.
In this simple setting the rate of convergence is
improved. \cite{balandat2020botorch} report similar
rate improvements in Bayesian optimization.

\begin{figure}
\centering
\begin{subfigure}{.32\textwidth}
\includegraphics[width=\textwidth]{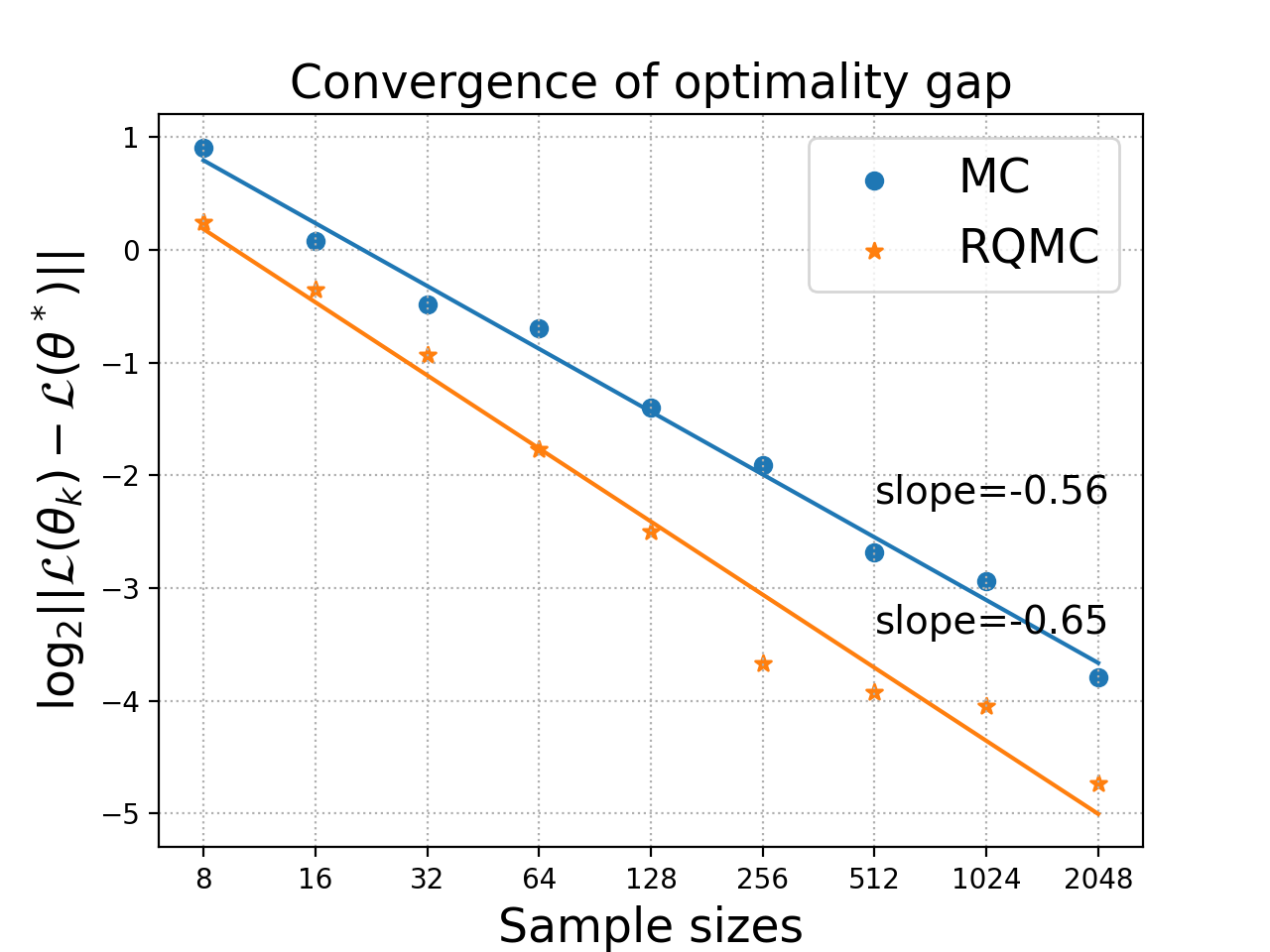}
\end{subfigure}
\begin{subfigure}{.32\textwidth}
\includegraphics[width=\textwidth]{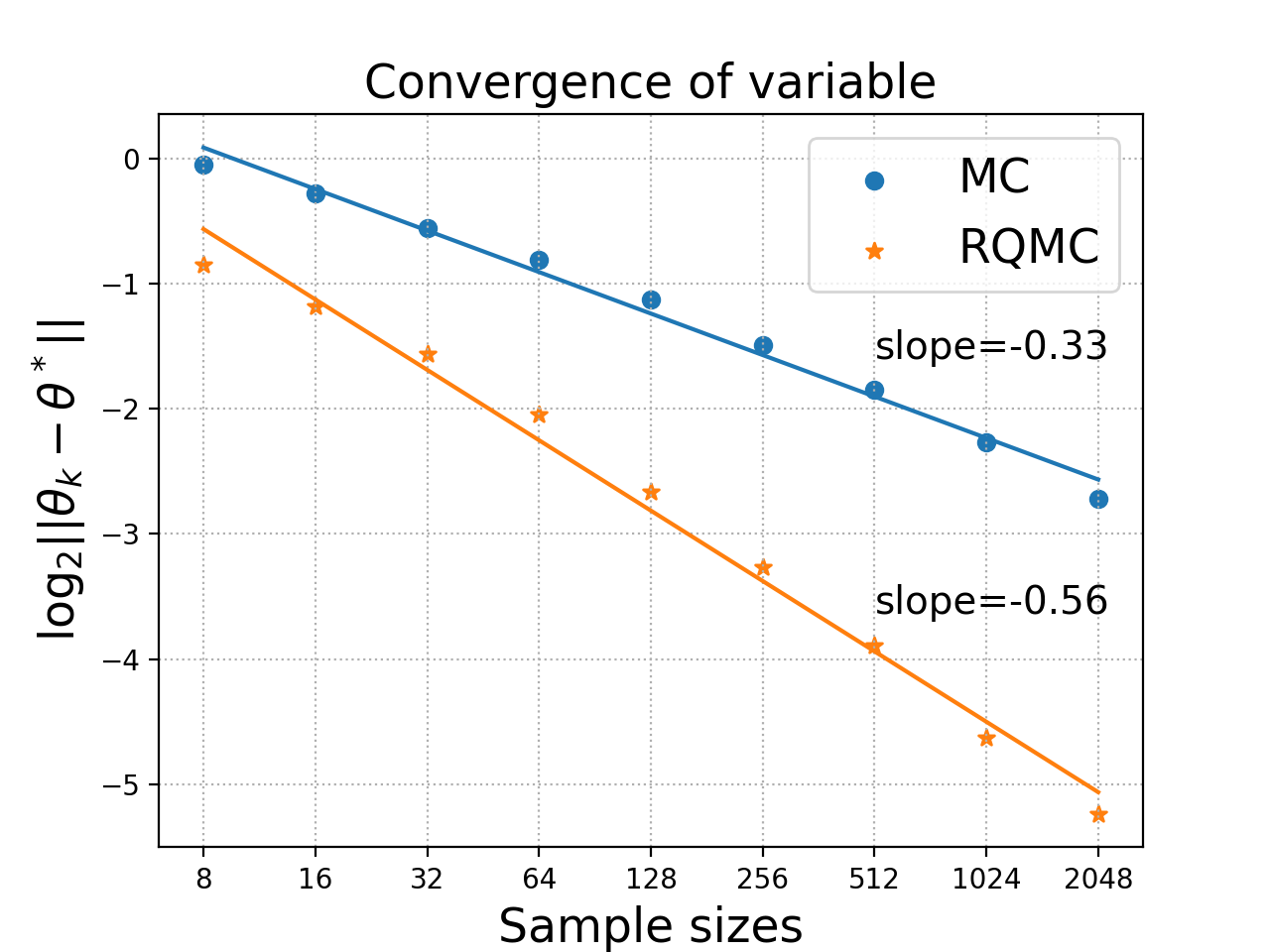}
\end{subfigure}
\begin{subfigure}{.32\textwidth}
\includegraphics[width=\textwidth]{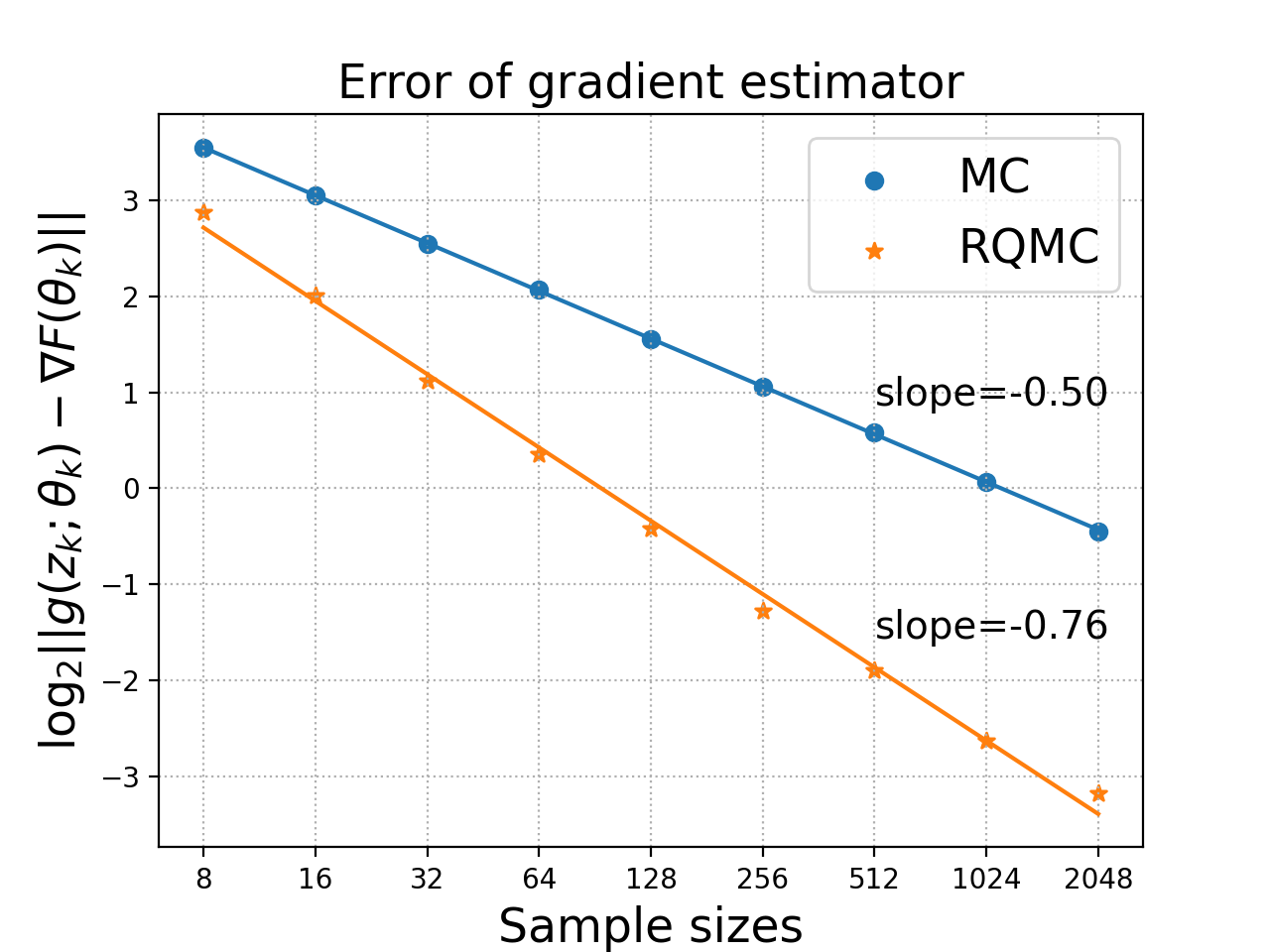}
\end{subfigure}
\caption{
The left panel has the average of
$\log_2|\calL(\param_k)-\calL(\param^*)|$
over the last $50$ values of $k$
versus $n$.  The middle panel has
that average of $\log_2\|\param_k-\param^*\|$.
The right panel has
the average of
$\log_2\Vert g(\bsz_k;\param_k)-\nabla F(\param_k)\Vert$
versus $n$ where $\hat g(\hat\param_k)=
(1/n)\sum_{i=1}^n g(\bsz_{k,i};\param_k)$.
The straight lines are least squares fits with
their slopes written above them.
}
\label{fig: linear regression}
\end{figure}

\subsection{Bayesian logistic regression}

Next we consider another simple example, though it is one
with no closed form expression for $\param^*$.
We use it to compare first and second order methods.
The Bayesian logistic regression model is defined by
\begin{align*}
\Pr(y_i=\pm1\giv \bsx_i,\beta)&=\frac{1}{1+\exp(\mp\bsx_i\tran\beta)},\quad i=1,\ldots,N
\quad\text{where}\quad
\beta\sim\dnorm(0,I_d).
\end{align*}
As before, $\beta$ is the unknown $\bsz$,
and $p_\param$ has $\beta_j\simind\dnorm(\mu_j,\sigma_j^2)$,
for $\param=(\mu_1,\dots,\mu_d,\sigma_1,\dots,\sigma_d)$.
The ELBO has the form
\begin{align*}
\calL_\param&=
\e_\param\biggl(\,{\sum_{i=1}^N \log S(y_i\bsx_i\tran\beta)}\biggr)
-\sum_{j=1}^d\Bigl(\frac{\sigma_j^2+\mu_j^2-1}{2}-\log\sigma_j\Bigr),
\end{align*}
where $S$ denotes the sigmoid function $S(x)=(1+e^{-x})^{-1}$.


In our experiments, we take $N=30$ and $d=100$.
With $d>N$ it is very likely that the data can be perfectly
linearly separated and then a Bayesian approach
provides a form of regularization.
The integral to be computed is
in $100$ dimensions, and the parameter to be optimized is in $200$ dimensions. We
generate the data from $\beta\sim\N(0,I_d/N)$,
then $\bsx_i\simiid\N(0,I_d)$ and finally
$y_i = 1$ with probability $1/(1+e^{-\bsx_i\tran\beta})$
are sampled independently for $i=1,\dots,N$.

In Figure \ref{fig: logreg}, we show the convergence of ELBO versus wall clock
times for different combinations of sampling methods (MC, RQMC) and
optimization methods (AdaGrad, L-BFGS). The left panel draws $n_g=8$
samples in each optimization iterations, while the right panel takes
$n_g=128$. The initial learning rate for AdaGrad is 0.01. The L-BFGS
is described in Algorithm \ref{alg: qmc-l-bfgs}, with
$n_h=1024$ Hessian evaluations
every $B=20$ steps with memory size $M=50$ and  $\alpha=0.01$.
Because L-BFGS uses some additional gradient function evaluations
to update the Hessian information
at every $B$'th iteration that the first order methods
do not use, we compare wall clock times.
The maximum iteration count in
the line search was 20. We used the Wolfe condition (Condition 3.6 in \cite{nocedal2006numerical}) with $c_1=0.001$ and $c_2=0.01$.

For this problem, L-BFGS always converges faster than AdaGrad. We can also see
that plain MC is noisier than RQMC.
The ELBOs for AdaGrad still seem to be increasing slowly even
at the end of the time interval shown.
For AdaGrad, RQMC consistently has a slightly higher ELBO than MC does.

\begin{figure}
\centering
\begin{subfigure}{.48\textwidth}
\includegraphics[width=\textwidth]{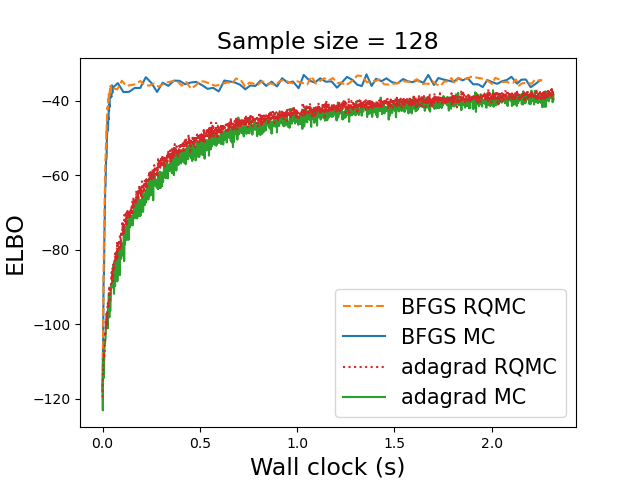}
\end{subfigure}
\begin{subfigure}{.48\textwidth}
\includegraphics[width=\textwidth]{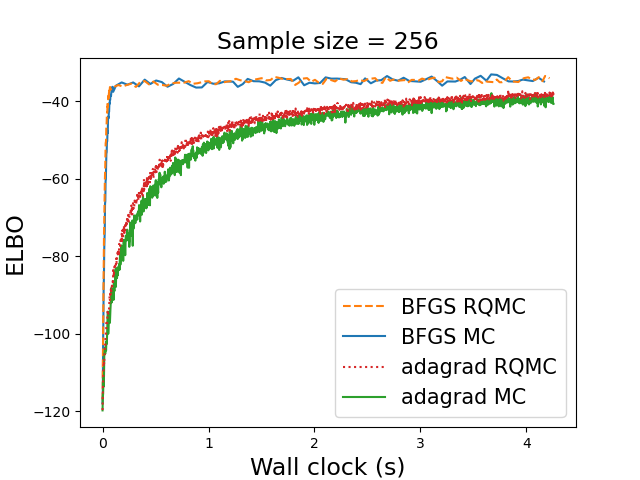}
\end{subfigure}
\caption{ELBO versus wall clock time in VB
for Bayesian logistic regression. The methods and setup
are described in the text. There are $n_g\in\{128,256\}$
gradient samples at each iteration and the second
order methods use $n_h = 1024$ Hessian samples
every $B=20$'th iteration.
}
\label{fig: logreg}
\end{figure}

\subsection{Crossed random effects}
In this section, we consider a crossed random effects model.
Both Bayesian and frequentist approaches to
crossed random effects can be a challenge
with costs scaling like $N^{3/2}$ or worse.
See \cite{papa:robe:zane:2020} and \cite{ghos:hast:owen:2020}
for Bayesian and frequentist approaches and also
 the dissertation of \cite{gao:thesis}.

An intercept only version of this model has
\begin{align*}
Y_{ij}&\simind\N(\mu + a_i + b_j,1),\quad 1\leq i\leq I,\quad 1\leq j\leq J
\end{align*}
given
$\mu\sim\N(0,1)$,
$a_i\simiid\N(0,\sigma_a^2)$,
and $b_j\simiid\N(0,\sigma_b^2)$
where $\log\sigma_a$ and
$\log\sigma_b$ are both
$\dnorm(0,1)$.  All of $\mu$, $a_i$, $b_j$ and the log
standard deviations are independent.

We use VB to approximate the posterior distribution of the
$d=I+J+3$ dimensional parameter
$\bsz=(\mu,\log\sigma_a,\log\sigma_b,\mathbf{a},\mathbf{b})$. In our example, we take $I=10$ and $J=5$.
As before $q(\bsz\giv\param)$ is chosen to be Gaussian with
independent coordinates and $\param$ has their means and standard deviations.
In Figure \ref{fig: cross effects}, we plot the
convergence of ELBO for different combinations of sampling methods and
optimization methods. The BFGS method takes $B=20$, $M=30$, $n_h=512$.
We used a learning rate of $0.01$ in AdaGrad. We observe that when the sample size is
8 (left), plain Monte Carlo has large fluctuations even when converged, especially
 for BFGS. When the sample size is 128 (right), the fluctuations disappear. But
RQMC still achieves a higher ELBO than plain Monte Carlo for BFGS. In both cases,
BFGS finds the optimum faster than AdaGrad.

\begin{figure}[t]
\centering
\begin{subfigure}{.48\textwidth}
\includegraphics[width=\textwidth]{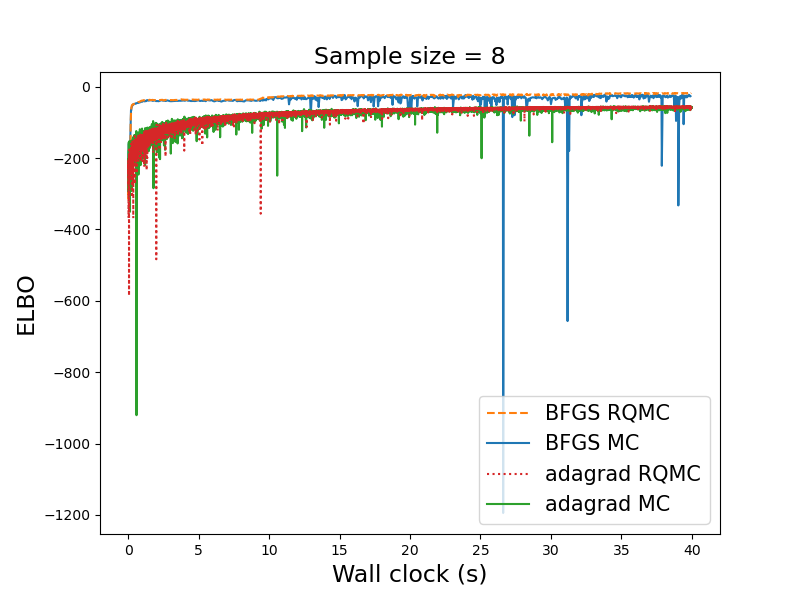}
\end{subfigure}
\begin{subfigure}{.48\textwidth}
\includegraphics[width=\textwidth]{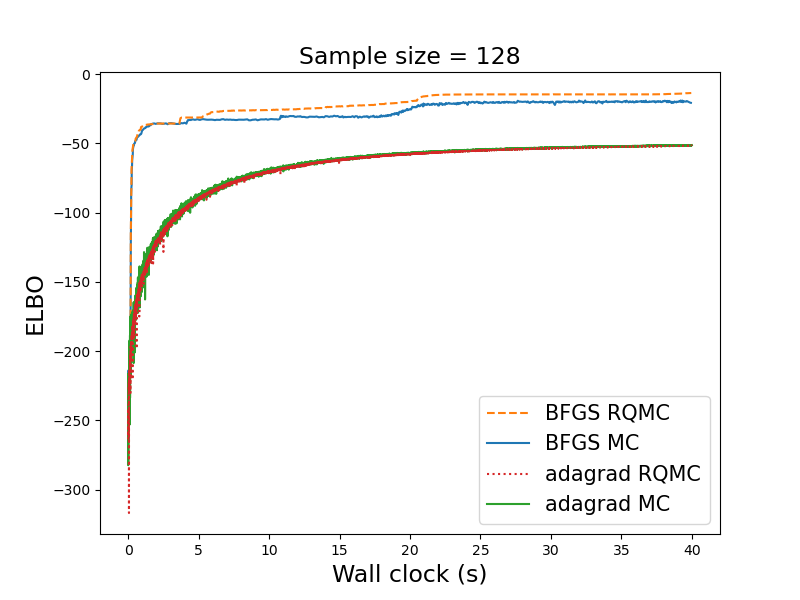}
\end{subfigure}
\caption{ELBO versus wall clock time in VB
for crossed random effects. The methods and setup
are described in the text. There are $n_g\in\{8,128\}$
gradient samples at each iteration.
}
\label{fig: cross effects}
\end{figure}

\subsection{Variational autoencoder}


A variational autoencoder (VAE, \cite{kingma2014auto}) learns a generative model
for a dataset. A VAE has a probabilistic \emph{encoder} and a probabilistic
\emph{decoder}. The encoder first produces a distribution $q_\phi(\bsz\giv\bsx)$ over the
latent variable $\bsz$ given a data point $\bsx$, then the decoder reconstructs a
distribution $p_\param(\bsx\giv\bsz)$ over the corresponding $\bsx$ from the latent variable $\bsz$.
The goal is to maximize the marginal probability $p_\param(\bsx)$. Observe that the
ELBO provides a lower bound of $\log(p_\param(\bsx))$:
\begin{align*}
\log p_{\param}(\bsx)-\KL(q_{\phi}(\bsz\giv\bsx)\,\|\, p_{\param}(\bsz\giv\bsx))
&=\e_\phi(\log p_\param(\bsx\giv\bsz)\giv\bsx)-\KL(q_{\phi}(\bsz\giv\bsx)\,\|\,p_\param(z))=:\calL(\param,\phi\giv\bsx),
\end{align*}
where $\e_\phi(\cdot\giv\bsx)$ denotes expection for random $\bsz$ given $\bsx$
with parameter $\phi$.
In this section $\bsz$ is the latent variable, and not a
part of the $\bigz$ that we use
in our MC or RQMC algorithms.
We do not refer to those variables in our VAE description below.

The usual objective is to maximize the ELBO
$\sum_{i=1}^N\calL(\param,\phi\giv\bsx_i)$ for a sample of $N$ IID $\bsx_i$
and now we have to optimize over $\phi$ as well as $\param$.
The first term $\e_\phi(\log p_\param(\bsx\giv\bsz))$
in the ELBO is the reconstruction
error, while the second term $\KL(q_{\phi}(\bsz\giv\bsx)\,\|\,p_\param(z))$
penalizes parameters $\phi$ that
give a posterior $q_\phi(\bsz\giv\bsx)$
too different from the prior $p_\param(\bsz)$.
Most commonly, $q_{\param}(\bsz\giv\bsx)=\N(\mu(\bsx;\param),\Sigma(\bsx;\param))$,
and $p_\param(\bsz)=\N(0,I)$,  so that the KL-divergence term
$\KL(q_{\phi}(\bsz\giv\bsx)\,\|\,p_\param(z))$
has a closed form. The decoding term $p_\param(\bsx\giv\bsz)$ is usually chosen to be a
 Gaussian or Bernoulli distribution, depending on the data type of $\bsx$.
The expectation $\e_\phi({\log p_\param(\bsx\giv\bsz)\giv\bsx})$ is
ordinarily estimated by  MC.
We implement  both plain MC and RQMC in our experiments.
To maximize the ELBO, the easiest way is to use
SGD or its variants. We also compare  SGD with L-BFGS in the experiments.

The experiment uses the MNIST dataset in PyTorch. It has 60,000
$28\times28$ gray scale images, and so
the dimension is 784. All experiments were conducted on a cluster node with
 2 CPUs and 4GB memory. The training was conducted in a mini-batch
manner with batch size 128. The encoder has a hidden layer with 800
nodes, and an output layer with 40 nodes, 20 for $\mu$ and 20 for $\sigma$.
Our $\Sigma(\bsx;\param)$ takes the form $\diag(\sigma(\bsx;\param))$.
The decoder has one hidden layer with 400 nodes.

In Figure \ref{fig: vae elbo}, we plot the ELBO versus wall clock time for
different combinations of sampling methods (MC, RQMC) and optimization
methods (Adam, BFGS).  The learning rate for Adam is 0.0001. For BFGS, the
memory size is $M=20$. The other tuning parameters are set to the defaults
from PyTorch. We observe that BFGS converged faster than Adam. For BFGS, we can
also see that RQMC achieves a slightly higher ELBO than MC. Figure
\ref{fig: recon bfgs rqmc} through \ref{fig: recon adam mc} shows some
reconstructed images using the four algorithms.

\begin{figure}[t]
  \centering
  \begin{tabular}[c]{cc}
    \multirow{4}{*}[14pt]{
    \begin{subfigure}{0.5\textwidth}
      \includegraphics[width=1.1\textwidth]{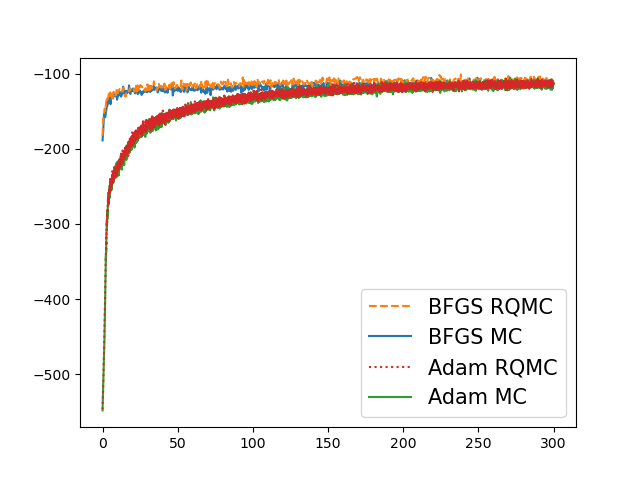}
      \caption{ELBO}
      \label{fig: vae elbo}
    \end{subfigure}
}&
   \begin{subfigure}[c]{0.39\textwidth}
      \includegraphics[width=\textwidth]{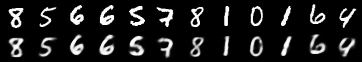}
      \caption{BFGS RQMC}
      \label{fig: recon bfgs rqmc}
    \end{subfigure}\\&
    \begin{subfigure}[c]{0.39\textwidth}
      \includegraphics[width=\textwidth]{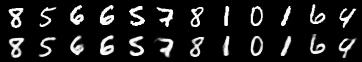}
      \caption{BFGS MC}
    \end{subfigure}\\&
    \begin{subfigure}[c]{0.39\textwidth}
      \includegraphics[width=\textwidth]{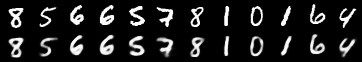}
      \caption{Adam RQMC}
    \end{subfigure}\\&
    \begin{subfigure}[c]{0.39\textwidth}
      \includegraphics[width=\textwidth]{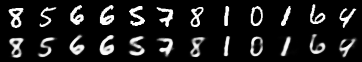}
      \caption{Adam MC}
      \label{fig: recon adam mc}
    \end{subfigure}
  \end{tabular}

\vspace*{.3cm} 
  \caption{
Plot (a) shows the ELBO versus wall clock time for
MC and RQMC used in both L-BFGS and Adam.
Plots (b) through (e) show example images.
}
  \label{fig: vae}
\end{figure}

\section{Discussion}\label{sec:discuss}

RQMC methods are finding uses in simulation optimization problems
in machine learning, especially in
first order SGD algorithms.
We have looked at their use in a second order, L-BFGS
algorithm.  RQMC is known theoretically and empirically
to improve the accuracy in integration problems
compared to both MC and QMC.
We have shown that improved estimation
of expected gradients translates directly into improved
optimization for quasi-Newton methods.
There is a small burden in reprogramming algorithms
to use RQMC instead of MC, but that is greatly mitigated
by the appearance of RQMC algorithms in
tools such as BoTorch \citep{balandat2020botorch}
and the forthcoming scipy 1.7 ({\tt scipy.stats.qmc.Sobol})
and QMCPy at \url{https://pypi.org/project/qmcpy/}.

Our empirical examples have used VB.
The approach has potential value in Bayesian
optimization \citep{frazier2018tutorial}
and optimal transport \citep{el2012bayesian,bigoni2016adaptive}
as well.

The examples we chose were of modest scale
where both first and second order methods could be used.
In these settings, we saw that second order methods improve
upon first order ones. For the autoencoder problem
the second order methods converged faster
than the first order ones did.
This also happenend for the crossed random effects problem
where the second order methods found better ELBOs than
the first order ones did and
RQMC-based quasi-Newton algorithm found a better ELBO than
the MC-based quasi-Newton did without increasing the wall clock time.

It is possible that RQMC will bring an advantage to conjugate
gradient approaches as they have some similarities to L-BFGS.
We have not investigated them.

\section*{Acknowledgments}

This work was supported by the National Science Foundation
under grant IIS-1837931.

\bibliographystyle{apalike}
\bibliography{ref,qmc}

\begin{thebibliography}{}

\bibitem[Agarwal et~al., 2012]{agarwal2012information}
Agarwal, A., Bartlett, P.~L., Ravikumar, P., and Wainwright, M.~J. (2012).
\newblock Information-theoretic lower bounds on the oracle complexity of
  stochastic convex optimization.
\newblock {\em IEEE Transactions on Information Theory}, 58(5):3235--3249.

\bibitem[Andrad{\'o}ttir, 1998]{andr:1998}
Andrad{\'o}ttir, S. (1998).
\newblock A review of simulation optimization techniques.
\newblock In {\em 1998 winter simulation conference}, volume~1, pages 151--158.
  IEEE.

\bibitem[Azuma, 1967]{azuma1967weighted}
Azuma, K. (1967).
\newblock Weighted sums of certain dependent random variables.
\newblock {\em Tohoku Mathematical Journal, Second Series}, 19(3):357--367.

\bibitem[Balandat et~al., 2020]{balandat2020botorch}
Balandat, M., Karrer, B., Jiang, D.~R., Daulton, S., Letham, B., Wilson, A.~G.,
  and Bakshy, E. (2020).
\newblock {BoTorch: A Framework for Efficient {M}onte-{C}arlo {B}ayesian
  Optimization}.
\newblock In {\em Advances in Neural Information Processing Systems 33}.

\bibitem[Basu and Mukherjee, 2017]{basu:mukh:2017}
Basu, K. and Mukherjee, R. (2017).
\newblock Asymptotic normality of scrambled geometric net quadrature.
\newblock {\em Annals of Statistics}, 45(4):1759--1788.

\bibitem[Beck, 2017]{beck:2017}
Beck, A. (2017).
\newblock {\em First-order methods in optimization}.
\newblock SIAM, Philadelphia.

\bibitem[Berahas et~al., 2016]{berahas2016multi}
Berahas, A.~S., Nocedal, J., and Tak{\'a}{\v{c}}, M. (2016).
\newblock A multi-batch {L-BFGS} method for machine learning.
\newblock {\em Advances in Neural Information Processing Systems}, pages
  1063--1071.

\bibitem[Bigoni et~al., 2016]{bigoni2016adaptive}
Bigoni, D., Spantini, A., and Marzouk, Y. (2016).
\newblock Adaptive construction of measure transports for {B}ayesian inference.
\newblock In {\em NIPS workshop on Approximate Inference}.

\bibitem[Blei et~al., 2017]{blei:kucu:macu:2017}
Blei, D.~M., Kucukelbir, A., and McAuliffe, J.~D. (2017).
\newblock Variational inference: A review for statisticians.
\newblock {\em Journal of the American Statistical Association},
  112(518):859--877.

\bibitem[Bollapragada et~al., 2018]{bollapragada2018progressive}
Bollapragada, R., Nocedal, J., Mudigere, D., Shi, H.~J., and Tang, P. T.~P.
  (2018).
\newblock A progressive batching {L-BFGS} method for machine learning.
\newblock In {\em International Conference on Machine Learning}, pages
  620--629. PMLR.

\bibitem[Bottou et~al., 2018]{bottou2018optimization}
Bottou, L., Curtis, F.~E., and Nocedal, J. (2018).
\newblock Optimization methods for large-scale machine learning.
\newblock {\em Siam Review}, 60(2):223--311.

\bibitem[Buchholz et~al., 2018]{buchholz2018quasi}
Buchholz, A., Wenzel, F., and Mandt, S. (2018).
\newblock Quasi-{M}onte {C}arlo variational inference.
\newblock In {\em International Conference on Machine Learning}, pages
  668--677. PMLR.

\bibitem[Byrd et~al., 2016]{byrd2016stochastic}
Byrd, R.~H., Hansen, S.~L., Nocedal, J., and Singer, Y. (2016).
\newblock A stochastic quasi-{N}ewton method for large-scale optimization.
\newblock {\em SIAM Journal on Optimization}, 26(2):1008--1031.

\bibitem[Caflisch et~al., 1997]{cafl:moro:owen:1997}
Caflisch, R.~E., Morokoff, W., and Owen, A.~B. (1997).
\newblock Valuation of mortgage backed securities using {Brownian} bridges to
  reduce effective dimension.
\newblock {\em Journal of Computational Finance}, 1(1):27--46.

\bibitem[Chen et~al., 2014]{chen:sriv:trav:2014}
Chen, W., Srivastav, A., and Travaglini, G., editors (2014).
\newblock {\em A panorama of discrepancy theory}.
\newblock Springer, Cham, Switzerland.

\bibitem[Cranley and Patterson, 1976]{cran:patt:1976}
Cranley, R. and Patterson, T. N.~L. (1976).
\newblock Randomization of number theoretic methods for multiple integration.
\newblock {\em SIAM Journal of Numerical Analysis}, 13(6):904--914.

\bibitem[Devroye, 1986]{devr:1986}
Devroye, L. (1986).
\newblock {\em Non-uniform Random Variate Generation}.
\newblock Springer, New York.

\bibitem[Dick et~al., 2013]{dick:kuo:sloa:2013}
Dick, J., Kuo, F.~Y., and Sloan, I.~H. (2013).
\newblock High-dimensional integration: the {quasi-Monte Carlo} way.
\newblock {\em Acta Numerica}, 22:133--288.

\bibitem[Dick and Pillichshammer, 2010]{dick:pill:2010}
Dick, J. and Pillichshammer, F. (2010).
\newblock {\em Digital sequences, discrepancy and quasi-{Monte Carlo}
  integration}.
\newblock Cambridge University Press, Cambridge.

\bibitem[Duchi et~al., 2011]{duchi2011adaptive}
Duchi, J., Hazan, E., and Singer, Y. (2011).
\newblock Adaptive subgradient methods for online learning and stochastic
  optimization.
\newblock {\em Journal of Machine Learning Research}, 12(7).

\bibitem[Duchi, 2018]{duchi2018introductory}
Duchi, J.~C. (2018).
\newblock Introductory lectures on stochastic optimization.
\newblock In Mahoney, M.~W., Duchi, J.~C., and Gilbert, A.~C., editors, {\em
  The mathematics of data}, volume~25, pages 99--186. American Mathematical
  Society, Providence, RI.

\bibitem[El~Moselhy and Marzouk, 2012]{el2012bayesian}
El~Moselhy, T.~A. and Marzouk, Y.~M. (2012).
\newblock Bayesian inference with optimal maps.
\newblock {\em Journal of Computational Physics}, 231(23):7815--7850.

\bibitem[Faure, 1982]{faur:1982}
Faure, H. (1982).
\newblock Discr\'epance de suites associ\'ees \`a un syst\`eme de num\'eration
  (en dimension $s$).
\newblock {\em Acta Arithmetica}, 41:337--351.

\bibitem[Frazier, 2018]{frazier2018tutorial}
Frazier, P.~I. (2018).
\newblock A tutorial on {B}ayesian optimization.
\newblock Technical report, arXiv:1807.02811.

\bibitem[Gao, 2017]{gao:thesis}
Gao, K. (2017).
\newblock {\em Scalable Estimation and Inference for Massive Linear Mixed
  Models with Crossed Random Effects}.
\newblock PhD thesis, Stanford University.

\bibitem[Ghosh et~al., 2020]{ghos:hast:owen:2020}
Ghosh, S., Hastie, T., and Owen, A.~B. (2020).
\newblock Backfitting for large scale crossed random effects regressions.
\newblock Technical report, arXiv:2007.10612.

\bibitem[Gower et~al., 2016]{gower2016stochastic}
Gower, R., Goldfarb, D., and Richt{\'a}rik, P. (2016).
\newblock Stochastic block {BFGS}: Squeezing more curvature out of data.
\newblock In {\em International Conference on Machine Learning}, pages
  1869--1878.

\bibitem[Joe and Kuo, 2008]{joe:kuo:2008}
Joe, S. and Kuo, F.~Y. (2008).
\newblock Constructing {Sobol'} sequences with better two-dimensional
  projections.
\newblock {\em SIAM Journal on Scientific Computing}, 30(5):2635--2654.

\bibitem[Johnson and Zhang, 2013]{johnson2013accelerating}
Johnson, R. and Zhang, T. (2013).
\newblock Accelerating stochastic gradient descent using predictive variance
  reduction.
\newblock In {\em Advances in Neural Information Processing Systems},
  volume~26, pages 315--323.

\bibitem[Kingma and Ba, 2014]{kingma2014adam}
Kingma, D.~P. and Ba, J. (2014).
\newblock Adam: A method for stochastic optimization.
\newblock Technical report, arXiv:1412.6980.

\bibitem[Kingma and Welling, 2014]{kingma2014auto}
Kingma, D.~P. and Welling, M. (2014).
\newblock Auto-encoding variational {B}ayes.
\newblock {\em stat}, 1050:1.

\bibitem[L'Ecuyer and Lemieux, 2002]{lecu:lemi:2002}
L'Ecuyer, P. and Lemieux, C. (2002).
\newblock A survey of randomized quasi-{M}onte {C}arlo methods.
\newblock In Dror, M., L'Ecuyer, P., and Szidarovszki, F., editors, {\em
  Modeling Uncertainty: An Examination of Stochastic Theory, Methods, and
  Applications}, pages 419--474. Kluwer Academic Publishers.

\bibitem[Loh, 2003]{loh:2003}
Loh, W.-L. (2003).
\newblock On the asymptotic distribution of scrambled net quadrature.
\newblock {\em Annals of Statistics}, 31(4):1282--1324.

\bibitem[Miller et~al., 2017]{miller2017reducing}
Miller, A.~C., Foti, N.~J., D~Amour, A., and Adams, R.~P. (2017).
\newblock Reducing reparameterization gradient variance.
\newblock In {\em Advances in Neural Information Processing Systems}.

\bibitem[Mohamed et~al., 2020]{mohamed2020monte}
Mohamed, S., Rosca, M., Figurnov, M., and Mnih, A. (2020).
\newblock {Monte Carlo} gradient estimation in machine learning.
\newblock {\em Journal of Machine Learning Research}, 21(132):1--62.

\bibitem[Moritz et~al., 2016]{moritz2016linearly}
Moritz, P., Nishihara, R., and Jordan, M. (2016).
\newblock A linearly-convergent stochastic {L-BFGS} algorithm.
\newblock In {\em Artificial Intelligence and Statistics}, pages 249--258.

\bibitem[Niederreiter, 1992]{nied:1992}
Niederreiter, H. (1992).
\newblock {\em Random Number Generation and Quasi-{Monte Carlo} Methods}.
\newblock SIAM, Philadelphia, PA.

\bibitem[Nocedal, 1980]{nocedal1980updating}
Nocedal, J. (1980).
\newblock Updating quasi-{N}ewton matrices with limited storage.
\newblock {\em Mathematics of computation}, 35(151):773--782.

\bibitem[Nocedal and Wright, 2006]{nocedal2006numerical}
Nocedal, J. and Wright, S. (2006).
\newblock {\em Numerical Optimization}.
\newblock Springer Science \& Business Media, New York, second edition.

\bibitem[Owen, 1995]{rtms}
Owen, A.~B. (1995).
\newblock Randomly permuted $(t,m,s)$-nets and $(t,s)$-sequences.
\newblock In Niederreiter, H. and Shiue, P. J.-S., editors, {\em Monte Carlo
  and Quasi-Monte Carlo Methods in Scientific Computing}, pages 299--317, New
  York. Springer-Verlag.

\bibitem[Owen, 1997a]{snetvar}
Owen, A.~B. (1997a).
\newblock {Monte Carlo} variance of scrambled net quadrature.
\newblock {\em SIAM Journal of Numerical Analysis}, 34(5):1884--1910.

\bibitem[Owen, 1997b]{smoovar}
Owen, A.~B. (1997b).
\newblock Scrambled net variance for integrals of smooth functions.
\newblock {\em Annals of Statistics}, 25(4):1541--1562.

\bibitem[Owen, 2005]{variation}
Owen, A.~B. (2005).
\newblock Multidimensional variation for quasi-{Monte Carlo}.
\newblock In Fan, J. and Li, G., editors, {\em International Conference on
  Statistics in honour of Professor Kai-Tai Fang's 65th birthday}.

\bibitem[Owen, 2008]{localanti}
Owen, A.~B. (2008).
\newblock Local antithetic sampling with scrambled nets.
\newblock {\em Annals of Statistics}, 36(5):2319--2343.

\bibitem[Owen and Rudolf, 2021]{sllnrqmc}
Owen, A.~B. and Rudolf, D. (2021).
\newblock A strong law of large numbers for scrambled net integration.
\newblock {\em SIAM Review}, 63(2):360--372.

\bibitem[Paisley et~al., 2012]{paisley2012variational}
Paisley, J., Blei, D.~M., and Jordan, M.~I. (2012).
\newblock Variational {B}ayesian inference with stochastic search.
\newblock In {\em Proceedings of the 29th International Coference on
  International Conference on Machine Learning}, pages 1363--1370.

\bibitem[Papaspiliopoulos et~al., 2020]{papa:robe:zane:2020}
Papaspiliopoulos, O., Roberts, G.~O., and Zanella, G. (2020).
\newblock Scalable inference for crossed random effects models.
\newblock {\em Biometrika}, 107(1):25--40.

\bibitem[Shi et~al., 2020]{shi2020noise}
Shi, H.~J., Xie, Y., Byrd, R., and Nocedal, J. (2020).
\newblock A noise-tolerant quasi-{N}ewton algorithm for unconstrained
  optimization.
\newblock Technical report, arXiv:2010.04352.

\bibitem[Sobol', 1969]{sobo:1969}
Sobol', I.~M. (1969).
\newblock {\em Multidimensional Quadrature Formulas and {H}aar Functions}.
\newblock Nauka, Moscow.
\newblock (In Russian).

\bibitem[Xie et~al., 2020]{xie2020analysis}
Xie, Y., Byrd, R.~H., and Nocedal, J. (2020).
\newblock Analysis of the {BFGS} method with errors.
\newblock {\em SIAM Journal on Optimization}, 30(1):182--209.

\bibitem[Yue and Mao, 1999]{yue:mao:1999}
Yue, R.-X. and Mao, S.-S. (1999).
\newblock On the variance of quadrature over scrambled nets and sequences.
\newblock {\em Statistics \& probability letters}, 44(3):267--280.

\end{thebibliography}

\vfill\eject
\appendix
\section{Proof of main theorems}
Our approach is similar to that used by \cite{buchholz2018quasi}.
They studied RQMC with SGD whereas we consider
L-BFGS, a second order method.


\subsection{Proof of Theorem \ref{thm: f}}
\label{sec: proof 1}
\begin{proof}
Let $e_k=\bar g(\bigz_k;\param_k)-\nabla F(\param_k)$ be the error in estimating the gradient at step $k$.
By the unbiasedness assumption, $\e(e_k\giv\param_k)=0$.
Starting from the Lipschitz condition, we have
\begin{align*}
F(\param_{k+1})-F(\param_k)&\leq\nabla F(\param_k)\tran(\param_{k+1}-\param_k)+\frac{L}{2}\|\param_{k+1}-\param_k\|^2\\
&=-\alpha_k\nabla F(\param_k)\tran H_k \bar g(\bigz_k;\param_k)+\frac{L\alpha_k^2}{2}\|H_k \bar g(\bigz_k;\param_k)\|^2\\
&=-\alpha_k\nabla F(\param_k)\tran H_k e_k-\alpha_k \nabla F(\param_k)\tran {H_k} \nabla F(\param_k)\\
&\qquad+\frac{L\alpha_k^2}{2}\left(\|H_k e_k\|^2+\|H_k \nabla F(\param_k)\|^2 + 2\nabla F(\param_k)\tran H_k^2 e_k\right)\\
&\leq -\alpha_k\nabla F(\param_k)\tran H_k e_k - \alpha_k h_1\|\nabla F(\param_k)\|^2  \\
&\qquad +\frac{L\alpha_k^2}{2} (h_2^2\|e_k\|^2+ h_2^2 \|\nabla F(\param_k)\|^2+ 2\nabla F(\param_k)\tran H_k^2 e_k)\\
&=-\alpha_k \nabla F(\param_k)\tran H_k e_k + L\alpha_k^2 \nabla F(\param_k)\tran H_k^2 e_k
\\
&\qquad -\alpha_kh_1\Bigl(1-\frac{L\alpha_k h_2^2}{2h_1}\Bigr)\|\nabla F(\param_k)\|^2+\frac{L\alpha_k^2h_2^2}{2}\|e_k\|^2.
\end{align*}
Because $\alpha_k=\alpha \leq {h_1}/({Lh_2^2})$, we have $1-{L\alpha_kh_2^2}/({2h_1})\geq1/2$.
Because strong convexity implies
\[
\|\nabla F(\param)\|^2\geq 2c(F(\param)-F^*),\quad \forall \param,
\]
we have
\begin{align*}
F(\param_{k+1})-F(\param_k)&\leq -\alpha_k \nabla F(\param_k)\tran (H_k-L\alpha_k H_k^2) e_k  -\alpha_kh_1c(F(\param_k)-F^*)+\frac{L\alpha_k^2h_2^2}{2}\|e_k\|^2.
\end{align*}
Adding $F(\param_k)-F^*$ to both sides gives
\begin{align*}
F(\param_{k+1})-F^*&\leq (1-\alpha_kh_1c)(F(\param_k)-F^*)+R_k,\numberthis\label{equ: recursive relation}
\end{align*}
where
\begin{align*}
R_k=-\alpha_k \nabla F(\param_k)\tran (H_k-L\alpha_k H_k^2) e_k +\frac{L\alpha_k^2h_2^2}{2}\|e_k\|^2.
\end{align*}
Let $\calF_k=\sigma(\bigz_i,1\leq i\leq k)$ be the filtration generated by the random
inputs $\{\bigz_k\}$ to our sampling process.
Because $\bigz_k$ are mutually independent and $H_k$ is
independent of $\bigz_k$,
we have $\param_k,H_k\in\calF_{k-1}$ and $\e(e_k\giv\calF_{k-1})=0$.
Then
\begin{align*}
\e\bigl(\nabla F(\param_k)\tran (H_k-L\alpha_k H_k^2) e_k\giv \calF_{k-1}\bigr)&=\nabla F(\param_k)\tran (H_k-L\alpha_k H_k^2)\e(e_k\giv \calF_{k-1})=0.
\end{align*}
Therefore, $\nabla F(\param_k)\tran (H_k-L\alpha_k H_k^2) e_k $ is a
martingale difference sequence w.r.t.\ $\calF_k$.
Let
$$V_k=\e(\|e_k\|^2\giv\param_k)=\tr\bigl(\Var{\bar g(\bigz_k;\param_k)\giv\param_k}\bigr).$$
Then $\|e_k\|^2-V_k$ is also a martingale difference sequence w.r.t. $\calF_{k}$.
So we can write
\begin{align*}
R_k=\nu_k+\frac{L\alpha_k^2h_2^2}{2}V_k,
\end{align*}
where
\begin{align*}
\nu_k=-\alpha_k \nabla F(\param_k)\tran (H_k-L\alpha_k H_k^2) e_k +\frac{L\alpha_k^2h_2^2}{2}(\|e_k\|^2-V_k)
\end{align*}
is a martingale difference sequence, and
${L\alpha_k^2h_2^2}/({2}V_k)$ is a deterministic sequence.
Recursively applying equation~\eqref{equ: recursive relation} gives
\begin{align*}
F(\param_{K})-F^*&\leq(1-\alpha ch_1)^{K}(F(\param_0)-F^*)+\sum_{k=0}^{K-1}(1-\alpha c h_1)^{K-k-1}R_k\\
&=(1-\alpha c h_1)^{K}(F(\param_0)-F^*)+\sum_{k=0}^{K-1}(1-\alpha c h_1)^{K-k-1}
\Bigl(\nu_k+\frac{L\alpha^2 h_2^2}{2}V_k\Bigr).
\end{align*}
By the bounded variance assumption, $V_k\leq M$ for all $k\geq0$. Hence,
\begin{align*}
F(\param_{K})-F^*&\leq(1-\alpha c)^{K}(F(\param_0)-F^*)+\frac{\alpha Lh_2^2}{2ch_1}M+\sum_{k=0}^{K-1}(1-\alpha ch_1)^{K-k-1}\nu_k.
\end{align*}
Taking expectations on both sides proves \eqref{eq: bound expectation}.

To prove the finite sample guarantee \eqref{eq: bound finite sample}, it remains to bound the martingale $\sum_{k=0}^{K-1}(1-\alpha ch_1)^{K-k-1}\nu_k$ with high probability.
We assumed a bound on $\Vert g(\bsz;\param)\Vert$
which implies one for $\Vert \bar g(\bigz;\param)\Vert$ as well for any fixed $n$.
When the norms of the gradient $\nabla F(\param)$ and gradient
estimator $\bar g(\bigz;\param)$ are bounded by such a constant $C$ for all $\param$ and $\bigz$, then $\|e_k\|=\|\bar g(\bigz;\param)-\nabla F(\param) \|\leq 2C$, and
we have the bound
\begin{align*}
|\nu_k|\leq \alpha |\nabla F(\param_k)\tran (H_k-L\alpha H_k^2)e_k| + \frac{L\alpha^2h_2^2}{2}C^2 \leq 2\alpha (h_2-L\alpha h_1^2+{L\alpha h_2^2})C^2=:C',
\end{align*}
where the second inequality uses that the largest eigenvalue of $H_k-L\alpha H_k^2$ is upper bounded by $h_2-L\alpha h_1^2$.
By the Azuma-Hoeffding inequality \citep{azuma1967weighted}, for all $t\geq 0$,
\begin{align*}
\pr\biggl(\,{\sum_{k=1}^K(1-\alpha ch_1)^{K-k-1}\nu_k\geq t}\biggr)
\leq \exp\biggl({-\frac{2t^2}{\sum_{k=1}^K(1-\alpha c h_1)^{K-k-1}C'^2}}\biggr)
\leq \exp\biggr({-\frac{2t^2}{\frac{C'^2}{\alpha c h_1}}}\biggr).
\end{align*}
Setting $\ep^2={2t^2\alpha c h_1}/{C'^2}$ gives
\begin{align*}
t=\frac{C'}{\sqrt{2\alpha ch_1}}\ep=\frac{2\alpha C^2 (h_2-L\alpha h_1^2+{L\alpha h_2^2})}{\sqrt{2\alpha ch_1}}\ep\leq C^2\sqrt{\frac{2\alpha}{ch_1}}(h_2-L\alpha h_1^2+{h_1}{})\ep.
\end{align*}
So we have proved that with probability at least $1-e^{-\ep^2}$,
\begin{align*}
F(\param_{K})-F^*&\leq(1-\alpha c h_1)^{K}(F(\param_0)-F^*)+\frac{\alpha L h_2^2}{2ch_1}M+C^2\sqrt{\frac{2\alpha}{ch_1}}(h_2-L\alpha h_1^2+{h_1}{})\ep.
\end{align*}
\end{proof}

\subsection{Proof of Theorem \ref{thm: param}}
\label{sec: proof 2}

Our proof uses the following lemma.
\begin{lemma}\label{lem:udvbound}
Let $u,v\in\real^n$ satisfy
 $u\tran v\geq A$ and let $D\in\real^{n\times n}$ be symmetric
with $h_1I\preccurlyeq D\preccurlyeq h_2I$ where $0<h_1\leq h_2$.
Then
$$u\tran Dv\geq \frac{h_1+h_2}{2}A-\frac{h_2-h_1}{2} \|u\|\|v\|.$$
\end{lemma}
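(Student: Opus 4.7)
The plan is a one-line decomposition of $D$ around its midrange, reducing the problem to a Cauchy–Schwarz estimate. Specifically, I would write
\[
D = \tfrac{h_1+h_2}{2}I + E, \quad\text{where}\quad E := D - \tfrac{h_1+h_2}{2}I.
\]
Since $D$ is symmetric with eigenvalues in $[h_1,h_2]$, the matrix $E$ is symmetric with eigenvalues in $[-\tfrac{h_2-h_1}{2},\tfrac{h_2-h_1}{2}]$, so its spectral norm satisfies $\|E\|\leq \tfrac{h_2-h_1}{2}$.

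Next I would expand
\[
u\tran D v = \tfrac{h_1+h_2}{2}\, u\tran v + u\tran E v.
\]
Using the assumed lower bound $u\tran v \geq A$ on the first term and Cauchy–Schwarz together with the spectral-norm bound on the second term, namely $|u\tran E v|\leq \|E\|\,\|u\|\,\|v\|\leq \tfrac{h_2-h_1}{2}\|u\|\|v\|$, I obtain
\[
u\tran D v \geq \tfrac{h_1+h_2}{2}A - \tfrac{h_2-h_1}{2}\|u\|\|v\|,
\]
which is exactly the claim.

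There is essentially no obstacle here: the only subtle point is that applying the assumption $u\tran v\geq A$ to the first term requires $h_1+h_2>0$ (so that we are not flipping the inequality), which is guaranteed by $0<h_1\leq h_2$. Everything else is the standard trick of writing a matrix whose spectrum lies in $[h_1,h_2]$ as a scalar multiple of the identity plus a perturbation of operator norm at most $(h_2-h_1)/2$.
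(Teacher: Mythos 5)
Your proof is correct. It is a different (and arguably cleaner) route than the paper's: you shift $D$ by its midrange, writing $D=\tfrac{h_1+h_2}{2}I+E$ with $\|E\|\le\tfrac{h_2-h_1}{2}$, and finish with the operator-norm bound $|u\tran Ev|\le\|E\|\,\|u\|\,\|v\|$. The paper instead diagonalizes $D$, splits $\sum_i u_iv_id_i$ according to the sign of $u_iv_i$ into $s_+$ and $s_-$, bounds $u\tran Dv\ge h_1s_++h_2s_-$, and then uses the algebraic identity $h_1s_++h_2s_-=\tfrac{h_1+h_2}{2}(s_++s_-)+\tfrac{h_2-h_1}{2}(s_--s_+)$ together with $s_+-s_-\le\|u\|\|v\|$. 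The two arguments are close cousins --- the paper's identity is exactly the scalar, eigenbasis-by-eigenbasis shadow of your decomposition --- but yours packages the perturbation into a single spectral-norm estimate and avoids the sign-splitting bookkeeping, while the paper's stays entirely at the level of coordinates and elementary inequalities. Your observation that $h_1+h_2>0$ is needed to multiply the hypothesis $u\tran v\ge A$ through without flipping the inequality is a correct and worthwhile point; the paper uses the same positivity implicitly when it lower-bounds $\tfrac{h_1+h_2}{2}(s_++s_-)$ by $\tfrac{h_1+h_2}{2}A$.
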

\begin{proof}
Without loss of generality, we can assume that $D$ is a diagonal matrix.
Otherwise, let $D=U\Lambda U\tran$ be the eigen-decomposition of $D$.
Then $u\tran Dv=(U\tran u)\tran\Lambda  (U\tran v)$,
 $\|U\tran u\|=\|u\|$, $\|U\tran v\|=\|v\|$, and $(U\tran u)\tran (U\tran v)=u\tran v$,
and we can replace $D$ by $\Lambda$.

Let $d_1,\ldots,d_n$ be the diagonal entries of $D$. Then $u\tran Dv=\sum_{i=1}^nu_iv_id_i$. Let $s_+=\sum_{i:u_iv_i\geq 0}u_iv_i$ and $s_-=\sum_{i:u_iv_i<0}u_iv_i$. Note that $s_+ + s_-=u\tran v\geq A$, $s_+-s_-=\sum_{i=1}^n|u_iv_i|\leq \|u\|\|v\|$.
Then for any $D$,
\[
u\tran Dv\geq h_1 s_+ + h_2s_-=\frac{h_1+h_2}{2}(s_+ + s_-)+\frac{h_2-h_1}{2}(s_- - s_+)\geq \frac{h_1+h_2}{2}A-\frac{h_2-h_1}{2}\|u\|\|v\|.
\]

\end{proof}

\par\noindent
Now we are ready to prove Theorem~\ref{thm: param}.
\begin{proof}
We start by decomposing
\begin{align*}
\|\param_{k+1}-\param^*\|^2&=\|\param_{k+1}-\param_k+\param_k-\param^*\|^2\\
&=\|\param_k-\param^*\|^2+\alpha^2\|H_k \bar g(\bigz_k;\param_k)\|^2-2\alpha (\param_k-\param^*)\tran H_k \bar g(\bigz_k;\param_k).
\end{align*}
Note that only $\bar g(\bigz_k;\param_k)$ and $\param_{k+1}$ depend on $\bigz_k$.
Taking expectation w.r.t.\ $\bigz_k$ on both sides gives
\begin{align*}
\e\bigl(\|\param_{k+1}-\param^*\|^2\bigr)&\leq \|\param_k-\param^*\|^2+\alpha^2h_2^2 (M+\|\nabla F(\param_k)\|^2)-2\alpha(\param_k-\param^*)\tran H_k \nabla F(\param_k)\\
&\leq \|\param_k-\param^*\|^2+\alpha^2h_2^2(M+L^2\|\param_k-\param^*\|^2)-2\alpha (\param_k-\param^*)\tran H_k \nabla F(\param_k).\numberthis\label{eq: bound param 1}
\end{align*}
By strong convexity of $F(\cdot)$,
\[
(\param_k-\param^*)\tran \nabla F(\param_k)\geq F(\param_k) -F^* +\frac{c}{2}\|\param_k-\param^*\|^2\geq c\|\param_k-\param^*\|^2.
\numberthis\label{equ: inner prod}
\]

Using Lemma~\ref{lem:udvbound} and equation~\eqref{equ: inner prod}, we have
\begin{align*}
(\param_k-\param^*)\tran H_k \nabla F(\param_k) &\geq \frac{h_1+h_2}{2}c\|\param_k-\param^*\|^2-\frac{h_2-h_1}{2}\|\param_k-\param^*\| \|\nabla F(\param_k)\| \\
&\geq \Bigl(\frac{h_1+h_2}{2}c-\frac{h_2-h_1}{2}L\Bigr)\|\param_k-\param^*\|^2,
\end{align*}
where the last inequality is due to $\|\nabla F(\param_k)\|\leq L\|\param_k-\param^*\|$.
Combining this with \eqref{eq: bound param 1} gives
\begin{align*}
\e\bigl({\|\param_{k+1}-\param^*\|^2}\bigr)
&\leq \bigl(1+\alpha^2L^2h_2^2-\alpha [(h_1+h_2)c- (h_2-h_1)L]\bigr)\|\param_k-\param^*\|^2+\alpha^2h_2^2M.
\end{align*}
We have assumed that
$$0<\alpha <\frac{(h_1+h_2)c-(h_2-h_1)L}{2L^2h_2^2}$$
and it then follows that
$$\bigl(1+\alpha^2L^2h_2^2-\alpha [(h_1+h_2)c- (h_2-h_1)L]\bigr)
\leq 1-\alpha^2h_2^2L^2
$$
from which
\begin{align}\label{eq:errorpropagation}
\e\bigl({\|\param_{k+1}-\param^*\|^2}\bigr)
&\leq
\bigl(1-\alpha^2h_2^2L^2\bigr)
\|\param_k-\param^*\|^2+\alpha^2h_2^2M.
\end{align}
Applying the recursive error formula~\eqref{eq:errorpropagation}
we get
\begin{align*}
\e\bigl(\|\param_{k}-\param^*\|^2\bigr)
\leq (1-\alpha^2h_2^2L^2)^k\|\param_0-\param^*\|^2+\frac{M}{L^2}.
\end{align*}
where the expectation is over $\bigz_1,\dots,\bigz_k$.
\end{proof}

\end{document}